\newcommand{\1}{\mathbb{I}}
\newcolumntype{Y}{>{\centering\arraybackslash}X}
\acrodef{SCP}{Split Conformal Prediction}
\acrodef{APS}{Adaptive Prediction Sets}
\acrodef{TSCP}{Transductive Split Conformal Prediction}
\def\eqref#1{eq.~\ref{#1}}
\def\1{\bm{1}}
\def\vv{{\bm{v}}}
\def\vx{{\bm{x}}}
\def\mX{{\bm{X}}}
\def\mY{{\bm{Y}}}
\def\mZ{{\bm{Z}}}
\DeclareMathAlphabet{\mathsfit}{\encodingdefault}{\sfdefault}{m}{sl}
\SetMathAlphabet{\mathsfit}{bold}{\encodingdefault}{\sfdefault}{bx}{n}
\def\gC{{\mathcal{C}}}
\def\gX{{\mathcal{X}}}
\def\gY{{\mathcal{Y}}}
\newtheorem{theorem}{Theorem}[section]
\newtheorem{lemma}[theorem]{Lemma}
\theoremstyle{definition}
\newtheorem{definition}[theorem]{Definition}
\theoremstyle{remark}
\newtheorem{remark}[theorem]{Remark}
\def\P{\mathbb{P}}
\newcommand*{\paran}[1]{\left(#1\right)}
\newcommand*{\prob}[1]{\mathbb{P}}
\newcommand*{\card}[1]{\left|#1\right|}
\newcommand{\E}{\mathbb{E}}
\newcommand{\R}{\mathbb{R}}
\newcommand{\quant}{\text{Quantile}}
\newcommand{\Q}{\mathrm{Q}}
\newcommand*{\SKIP}[1]{}
\title{Fundamental bounds on efficiency-confidence trade-off for transductive conformal prediction}
\date{}
\author{
Arash Behboodi, Alvaro H.C. Correia, Fabio Valerio Massoli, Christos Louizos \\{Qualcomm AI research}\thanks{Qualcomm AI Research is an initiative of Qualcomm Technologies, Inc.} 
}
\begin{document}
\maketitle

\begin{abstract}
 Transductive conformal prediction addresses the simultaneous prediction for multiple data points. Given a desired confidence level, the objective is to construct a prediction set that includes the true outcomes with the prescribed confidence. We demonstrate a fundamental trade-off between confidence and efficiency in transductive methods, where efficiency is measured by the size of the prediction sets. Specifically, we derive a strict finite-sample bound showing that any non-trivial confidence level leads to exponential growth in prediction set size for data with inherent uncertainty. The exponent scales linearly with the number of samples and is proportional to the conditional entropy of the data. Additionally, the bound includes a second-order term, dispersion, defined as the variance of the log conditional probability distribution. We show that the transductive methods based on the approximate conditional distribution can approach this bound. Inspired by this setup, we introduce a practical transductive prediction algorithm that surpasses Bonferroni methods.
\end{abstract}

\section{Introduction}
Modern decision systems often need to \emph{predict many outcomes at once} and act on the \emph{joint result}. Examples include certifying components in a quality-control batch, screening biological samples for pathogens, or approving software changes before release.
In such settings, the cost of even a single error can be high, making \emph{distribution-free guarantees} on the \emph{entire vector of predictions} essential.

Conformal prediction (CP) \citep{vovk_algorithmic_2022} offers a principled framework for constructing prediction sets with finite-sample, distribution-free coverage guarantees under minimal assumptions. Typically, CP methods operate on individual input–output pairs, where each input $X$ is associated with a label $Y$. However, many real-world systems require \emph{joint guarantees} across multiple predictions, motivating the study of \emph{transductive conformal prediction (TCP)}. TCP constructs a joint prediction set for a batch of test inputs $X_1, \dots, X_n$, ensuring that the corresponding label vector $Y_1, \dots, Y_n$ lies within the set with a prescribed confidence level (e.g., 95\%).

TCP offers stronger guarantees, better than combining individual conformal prediction. The main theme of this paper is the efficiency of TCP predictions, for example, measured in terms of the prediction set size. How small can such joint sets be, on average, while still guaranteeing coverage? How can one design efficient TCP methods? These questions are not merely practical, they probe the limits of uncertainty quantification in multi-output prediction. Our paper addresses these questions and proposes a framework which sets the target for the efficiency of practical TCP methods. 
Our contributions are as follows. We derive \textbf{fundamental bounds} on  the efficiency-confidence trade-off. We prove that for any non-trivial confidence level, the expected size of any valid joint prediction set must grow \emph{exponentially} with the number of test points. The growth rate is governed by the conditional entropy $H(Y|X)$ and a second-order term we call \emph{dispersion}, which captures the variance of the log-conditional probabilities. We show that this bound is \emph{tight} by constructing an idealized predictor (with oracle access to $P(Y|X)$) that matches the first and second-order terms. We show that approximate conditional distribution can be used to approach this bound, and the additional efficiency penalty scales gracefully with the approximation error of the distribution. Finally, we introduce a transductive split conformal prediction scheme for a practical and efficient prediction.

These results hold under minimal assumptions: they apply to any conformity score, extend to a larger class of efficiency metrics beyond prediction set size, and are validated by experiments showing their relevance in finite-sample regimes, highlighting inefficiencies in existing transductive methods. This result can be seen as a framework for designing and guiding efficient predictors akin to Shannon theory for code design. For synthetic settings with computable entropy and dispersion terms, the research works can focus on closing the gap with the fundamental limits derived in this work. As an example, we provide a new transductive prediction algorithm inspired by our theory and show its efficiency.

\section{Transductive Conformal Predictors}

To prepare for our main results, this section contrasts standard conformal prediction (CP), which offers marginal coverage for individual predictions, with its transductive extension (TCP) that provides joint guarantees across a batch.

\paragraph{Notation.} In this paper, the random variables are denoted by capital letters $X_1, X_2,...$ and their realization by $x_1,x_2,\dots$, and the vectors and matrices are denoted by bold letters as $\mX,\vx$. $X_i^j$ denotes the tuple $(X_i,\dots, X_j)$. We use $P(Y|X)$ to denote the conditional distribution of labels $Y$ given samples $X$. We use $P$ as well to denote the distribution over $X$ and $Y$. The logarithms are all assumed to be natural logarithms, unless otherwise stated. The entropy $H(X)$ is defined as $\E[-\log P(X)]$, the conditional entropy defined as $H(Y|X):=\E[-\log P(Y|X)]$. The Kullback-Leibler divergence is defined as $D(Q\Vert P):= \E_Q[\log \mathrm{d}Q/\mathrm{d}P]$.  $\Q(\cdot)$ is the Gaussian Q-function defined as $\mathrm{Q}(t):=\P(X > t)$ for $X$ a standard normal distribution.

\paragraph{From Standard to Transductive Conformal Prediction (TCP).}
Standard conformal prediction (CP) constructs a \emph{per-input} prediction set that contains the true label with probability at least $1-\alpha$, under exchangeability. Formally, consider a sequence of labeled examples $Z_{1}^{m}=((X_i,Y_i):i\in[m])$, where $X_i\in\gX, Y_i\in\gY$ with $M$ distinct classes, i.e., $|\gY|=M$, and a test sample $X_{m+1}$ with unknown true label $Y_{m+1}$. Standard CP produces a set $\Gamma^{\alpha}(X_{m+1})$ that satisfy \emph{marginal coverage}: $\P(Y_{m+1} \notin \Gamma^{\alpha}(X_{m+1})) \leq \alpha.$ These sets are obtained by thresholding p-values: for each input, all labels with p-value above $\alpha$ are included. A popular variant, split CP \citep{Papadopoulos2002-hw,lei2018distribution}, computes these p-values using a pretrained predictor and a separate calibration set. More generally, CP can be formalized through \emph{transducers}, which map sequences of labeled examples to p-values in $[0,1]$, providing a unified view of conformal methods.

While the \emph{marginal} guarantee of standard CP is often sufficient for isolated decisions, many applications require system-level guarantees, such as maintaining a global missed-detection constraint in autonomous driving or ensuring consistency in ranking tasks \citep{Fermanian2025-transduc-ranking}. In these settings, a single error can invalidate the entire outcome, motivating \emph{joint} guarantees. Transductive conformal prediction (TCP) addresses this by constructing a joint prediction set for the whole test batch  \citep{Vovk2013-transductive-pred,vovk_algorithmic_2022}.
Given $Z_1^m$ and a batch of test samples $X_{m+1}^{m+n} = (X_{m+1},\dots,X_{m+n})$, a (transductive) confidence predictor outputs a set of candidate label vectors $(\mY_{m+1},\dots,\mY_{m+n})$ such that the error probability of the predictor $P_e$ is bounded
\begin{equation} \label{eq:tcp_coverage}
    P_e =\P\paran{Y_{m+1}^{m+n}\notin \Gamma^{\alpha}(Z_{1}^{m},X_{m+1}^{m+n})} \leq \alpha.
\end{equation}
If the predictor satisfies the significance level $\alpha$, we say it has confidence $1-\alpha$. Some works instead use the \emph{False Coverage Proportion (FCP)} as the error \citep{Fermanian2025-transduc-ranking}, which measures the average per-sample error rather than the joint error over the entire test set. This is a more relaxed criterion than the one adopted here and in \citep{vovk_algorithmic_2022} (see Appendix~\ref{supp:comparison} for details).

Operationally, TCP extends the conformal principle from single examples to sequences: instead of computing p-values for individual labels, we compute them for entire candidate label sequences using \emph{transductive conformity scores}. These scores assess how well a proposed joint labeling fits the observed data and the test batch. Thresholding these p-values yields a joint confidence set that guarantees coverage for all test points simultaneously. A common baseline is to aggregate per-sample p-values via Bonferroni:
build $\Gamma^{\alpha/n}(X_{m+i})$ for each test point and take the Cartesian product $\prod_{i=1}^n \Gamma^{\alpha/n}(X_{m+i})$, which satisfies~\eqref{eq:tcp_coverage} but can be inefficient as $n$ grows (cf.~our experiments).

\paragraph{Efficiency vs.\ Confidence.}
While~\eqref{eq:tcp_coverage} guarantees \emph{confidence}, practitioners also care about \emph{efficiency}: how large the joint prediction set is on average. These two objectives are inherently in tension, and understanding this trade-off is among the main objectives of this work. To that end, we measure efficiency by the cardinality of $\Gamma^{\alpha}(Z_{1}^{m},X_{m+1}^{m+n})$, though our results extend to some other notions of efficiency (see Appendix~\ref{sec:eff-conf-general}). Of particular interest is the \emph{efficiency rate}, which captures the exponential growth of the expected prediction set size as the number of test samples $n$ increases.

\begin{definition}
The \emph{efficiency rates} of a transductive conformal predictor are
\begin{align*}
\gamma_{n,m} &:=\frac{1}{n}\log\E\big|\Gamma^{\alpha}(Z_{1}^{m},X_{m+1}^{m+n})\big|,\\
\gamma_m^+&:=\limsup_{n\to\infty}\gamma_{n,m},\;
\gamma_m^-:=\liminf_{n\to\infty}\gamma_{n,m}.    
\end{align*}
When these limits coincide, we denote the outcome by $\gamma_m$.
\end{definition}

\paragraph{Research Questions and Road Map.} Building on the formalism above, we revisit the interpretation in \citet{Correia2024-zq}, where split CP is viewed through the lens of list decoding \citep{wozencraft_list_1958}. In this setting, the model output is treated as a noisy observation of the ground truth label, which enabled the authors to establish information-theoretic inequalities linking the efficiency of conformal prediction, as measured by the expected size of the prediction set, to the conditional entropy $H(Y|X)$ of the labeling distribution. 

However, some fundamental questions remain open. The \textbf{efficiency-confidence trade-off} characterized in \citet{Correia2024-zq} in terms of $H(Y|X)$ and the logarithm of the expected prediction set size was not tight. This raises the question: \emph{Can we derive tighter bounds on the efficiency-confidence trade-off, and characterize conditions under which these bounds are achievable?}

In the rest of this paper, we tackle these challenges. We establish fundamental bounds on the efficiency-confidence trade-off in the general transductive setting, capturing both asymptotic and finite-sample regimes and revealing a phase transition governed by the conditional entropy. Guided by the theoretical derivation, we introduce a practical transductive prediction algorithm that extends split conformal prediction to the transductive case.

\section{Fundamental Bounds on Efficiency-Confidence Trade-off}
\label{sec:eff_cof_trade_off}
\paragraph{Efficiency-Confidence Trade-off Theorem.}
In \citet{Correia2024-zq}, the authors derived new information-theoretic bounds that connected conformal prediction to list decoding. The bounds involved terms related to the efficiency of conformal prediction and the conditional entropy or KL-divergence terms and leveraged Fano's inequality and data processing inequality. In this section, we derive new bounds that can generally lead to tighter bounds. The proofs are deferred to Appendix \ref{sec:proof_off_cof_trade_off_section}. 

Consider the case where the error probability $P_e$ is not exceeding $\alpha$, that is
$\P\paran{Y_{m+1}^{m+n}\notin \Gamma^{\alpha}(Z_{1}^{m},X_{m+1}^{m+n})} \leq \alpha.$ We have the following result. 
\begin{theorem}[Efficiency-Confidence Trade-off Theorem]
    Consider a transductive conformal predictor $\Gamma^{\alpha}(Z_{1}^{m},X_{m+1}^{m+n})$ given a labeled dataset  $Z_{1}^{m}$ and test samples $X_{m+1}^{m+n}$ with unknown labels $Y_{m+1}^{m+n}$. If the predictor has the confidence $1-\alpha$, then for any $\beta\in (0,1)$, we have:
    \begin{equation*}
         \P( P(Y_{m+1}^{m+n}|X_{m+1}^{m+n}) \leq \beta )  \leq \alpha + \beta \E(|\Gamma^{\alpha}(Z_{1}^{m},X_{m+1}^{m+n})|) 
    \end{equation*}
\label{thm:verdu-han-transductive}
\vspace{-5mm}
\end{theorem}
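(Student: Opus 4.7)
The plan is to use a meta-converse / Verdú--Han style two-term decomposition of the event $\{P(Y_{m+1}^{m+n}\mid X_{m+1}^{m+n})\le\beta\}$, splitting it according to whether the true label sequence lies in the predicted set. Writing $\Gamma \equiv \Gamma^{\alpha}(Z_1^m, X_{m+1}^{m+n})$, $Y \equiv Y_{m+1}^{m+n}$, $X \equiv X_{m+1}^{m+n}$ for brevity, I would start from the disjoint decomposition
\begin{equation*}
\P(P(Y\mid X)\le\beta) = \P(P(Y\mid X)\le\beta,\ Y\notin\Gamma) + \P(P(Y\mid X)\le\beta,\ Y\in\Gamma).
\end{equation*}

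The first term is dominated by the miscoverage event $\{Y\notin\Gamma\}$, which has probability at most $\alpha$ by the confidence assumption, producing the additive $\alpha$ in the bound. For the second term I would condition on $(Z_1^m, X)$. Under the standard i.i.d.\ sampling assumption, $Y$ is conditionally independent of $Z_1^m$ given $X$ and distributed according to $P(\cdot\mid X)$, so $\Gamma$ is constant with respect to the inner expectation over $Y$. This lets me rewrite the inner conditional probability as
\begin{equation*}
\sum_{y\in\gY^n} P(y\mid X)\,\1\{P(y\mid X)\le\beta\}\,\1\{y\in\Gamma\} \le \beta \sum_{y\in\gY^n} \1\{y\in\Gamma\} = \beta\,|\Gamma|,
\end{equation*}
where the inequality uses that $P(y\mid X)\le\beta$ on the indicator's support. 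Taking expectation over $(Z_1^m, X)$ yields $\beta\,\E[|\Gamma|]$, and summing the two contributions gives the theorem.

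The argument is essentially bookkeeping, so the main obstacle is conceptual rather than technical: one must ensure that $\Gamma$ is measurable as a random set in $(Z_1^m, X)$ and treated as fixed in the inner sum, and that the conditional independence $Y\perp Z_1^m\mid X$ (implicit in the transductive i.i.d.\ setup) is invoked correctly so that the replacement $P(Y\mid X, Z_1^m) = P(Y\mid X)$ is valid. For countable $\gY$ the exchange of sum and expectation is immediate by Tonelli; for more general label spaces one would recast the discrete sum as an integral against counting measure on the (a.s.\ finite) realization of $\Gamma$, and the same chain of inequalities still goes through.
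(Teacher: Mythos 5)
Your proof is correct and follows essentially the same route as the paper's: the same decomposition of the event $\{P(Y_{m+1}^{m+n}\mid X_{m+1}^{m+n})\le\beta\}$ according to coverage, the confidence assumption bounding the miscoverage term by $\alpha$, and the independence of $Z_1^m$ from $(X_{m+1}^{m+n},Y_{m+1}^{m+n})$ justifying the replacement $P(\mY\mid\mX,\mZ)=P(\mY\mid\mX)\le\beta$ inside the prediction set. The measurability and conditional-independence points you flag are exactly the steps the paper labels (1) and (2), so nothing is missing.
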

Theorem \ref{thm:verdu-han-transductive} relates the inherent uncertainty about the labels, given by $P(Y_{m+1}^{m+n}|X_{m+1}^{m+n})$, to the expected prediction set size. 
When the inherent uncertainty is high, the conditional probability is more spread, and the probability $\P( P(Y_{m+1}^{m+n}|X_{m+1}^{m+n}) \leq \beta ) $  is high for smaller $\beta$, leading to larger lower bound on the prediction set size. 

\begin{remark}[\textbf{Other efficiency measures}]
    We have focused on the prediction set size as the notion of efficiency. It is possible to generalize this to other measures of efficiency on the prediction set. We show these results in Appendix \ref{sec:eff-conf-general}, in particular Theorem \ref{thm:verdu-han-withQ} as well as some examples of efficiency measures such as $S$-criterion and $N$-criterion \cite{vovk2016criteria}.

\end{remark}

\paragraph{Asymptotic phase transition of the expected prediction set size.}
As our first result, we show that, asymptotically, the expected prediction set size blows up exponentially for any nontrivial confidence. In other words, if the efficiency rate, i.e. the growth rate of $\E(|\Gamma^{\alpha}(Z_{1}^{m},X_{m+1}^{m+n})|) $, is slower that the conditional entropy $H(Y|X)$, the error converges to one. The following theorem summarizes the result.

\begin{theorem}
Consider a transductive conformal predictor $\Gamma^{\alpha}(Z_{1}^{m},X_{m+1}^{m+n})$ with confidence level $1-\alpha_n$ for $n$ test samples. Then, we have:
\begin{enumerate}
    \item If the asymptotic confidence is non-trivial, i.e., $\liminf_{n\to\infty}(1-\alpha_n)>0$, the efficiency rate satisfies:
    \[
    \gamma^{-}_m \geq H(Y|X).
    \]
    \item If $\gamma^{+}_m < H(Y|X)$, then the confidence vanishes asymptotically to zero:
    \[
    \lim_{n\to\infty}(1-\alpha_n)=0.
    \]
\end{enumerate}
\label{thm:fundamental_asymp_transductive}
\end{theorem}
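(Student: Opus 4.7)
The two parts of the theorem are contrapositive to one another (with the $\lim$ in part~(2) understood as $\liminf$, since $1-\alpha_n\in[0,1]$), so it suffices to establish part~(1). The plan is to feed Theorem~\ref{thm:verdu-han-transductive} with an exponentially decaying $\beta$ tuned to the conditional entropy, and combine this with the asymptotic equipartition property (AEP) for the test block. Under the i.i.d.\ assumption on the test pairs $(X_{m+i},Y_{m+i})$ implicit in the mathematical model, the strong law of large numbers gives
\[
-\frac{1}{n}\log P(Y_{m+1}^{m+n}|X_{m+1}^{m+n}) \;=\; -\frac{1}{n}\sum_{i=m+1}^{m+n}\log P(Y_i|X_i) \;\xrightarrow{n\to\infty}\; H(Y|X) \qquad \text{a.s.},
\]
so for every $\epsilon>0$ the quantity $p_n(\epsilon):=\P\!\bigl(-\log P(Y_{m+1}^{m+n}|X_{m+1}^{m+n})\ge n(H(Y|X)-\epsilon)\bigr)$ converges to $1$.

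Fix $\epsilon>0$ and take $\beta_n = e^{-n(H(Y|X)-\epsilon)}$, so that the event $\{P(Y_{m+1}^{m+n}|X_{m+1}^{m+n})\le\beta_n\}$ coincides with the one above. Theorem~\ref{thm:verdu-han-transductive} then rearranges to
\[
\E\bigl(|\Gamma^{\alpha_n}(Z_1^m,X_{m+1}^{m+n})|\bigr) \;\ge\; \frac{p_n(\epsilon)-\alpha_n}{\beta_n} \;=\; e^{n(H(Y|X)-\epsilon)}\bigl(p_n(\epsilon)-\alpha_n\bigr).
\]
Under the hypothesis $\liminf_n(1-\alpha_n)=\delta>0$ we have $\alpha_n\le 1-\delta/2$ eventually, while $p_n(\epsilon)\ge 1-\delta/4$ for all large $n$; hence $p_n(\epsilon)-\alpha_n\ge \delta/4$ eventually. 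Taking $\tfrac{1}{n}\log$ followed by $\liminf_{n\to\infty}$ yields $\gamma^{-}_m\ge H(Y|X)-\epsilon$, and sending $\epsilon\downarrow 0$ gives part~(1); part~(2) follows by contraposition.

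The only delicate point will be the AEP step: it requires the joint law of the test block to match the one used to define $\Gamma^{\alpha}$ and to take the expectation $\E(|\Gamma^{\alpha}|)$, and it needs a concentration regime in which $-\tfrac{1}{n}\log P(Y|X)$ stabilizes at $H(Y|X)$. In the i.i.d.\ model adopted earlier in the paper this is immediate from the SLLN; in broader settings one would replace the SLLN by the Shannon--McMillan--Breiman theorem and repeat the same two steps verbatim.
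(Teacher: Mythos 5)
Your proposal is correct and follows essentially the same route as the paper's own proof: both apply Theorem~\ref{thm:verdu-han-transductive} with $\beta=e^{-n(H(Y|X)-\epsilon)}$, invoke the law of large numbers to send the left-hand probability to one, and rearrange to force the exponential growth of $\E(|\Gamma^{\alpha}|)$. Your explicit caveat that part~(2) should be read with $\liminf$ is apt, since the contrapositive argument (and indeed the paper's own case analysis, which only controls a subsequence via $\gamma^{-}_m$) yields $\liminf_{n\to\infty}(1-\alpha_n)=0$ rather than the full limit.
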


\begin{remark}
The theorem states a fundamental asymptotic trade-off between the confidence level and the prediction set size. Roughly, the prediction set size needs to grow exponentially at least as $e^{nH(Y|X)}$ to avoid a non-trivial confidence level. Another insight is that asymptotically, there is a phase transition at the efficiency rate $H(Y|X)$ below which it is impossible to get non-trivial confidence. The result does not indicate anything regarding the impact of the asymptotic confidence level on the set size. Indeed, it can be seen that $\liminf_{n\to\infty}\frac{1}{n}\log(1-\alpha_n)=0$ for any non-trivial $\alpha_n$. In other words, it seems that it suffices to have $\gamma_m^{-}\geq H(Y|X)$ to get any non-trivial confidence \textit{asymptotically}. In the case of classical conformal prediction, where the prediction set of each sample is predicted independently, the result means that the expected prediction set size is greater than or equal to $e^{H(Y|X)}$, similarly reported in \citet{Correia2024-zq}.
\end{remark}

\paragraph{Efficiency rate in finite sample setting.}

We can derive a non-asymptotic bound for the efficiency-confidence trade-off using the growth rate of the average prediction set size. 
\begin{theorem}
    For a transductive conformal predictor with the confidence level $1-\alpha$, consider the efficiency rate defined as:
    \[
    \gamma_{n,m}:=\frac{1}{n}\log\E|\Gamma^{\alpha}(Z_{1}^{m},X_{m+1}^{m+n})|,
    \]
    which is the growth exponent of the prediction set size. Then for any $n$, we have:
    \begin{equation*}
 \log\Delta + nH(Y|X) +
 \sqrt{n}\sigma
  \Q^{-1}\paran{\alpha + \frac{\rho}{\sqrt{n}\sigma^3} + \Delta}  \leq { n\gamma_{n,m}}
\end{equation*}
if 
$\alpha + \frac{\rho}{\sqrt{n}\sigma^3} + \Delta\in[0,1]$ where $\Delta>0$ and  $\Q(\cdot)$ is the Gaussian Q-function, and:
\begin{align*}
    \sigma&:=\paran{\mathrm{Var}\paran{\log P(Y|X)}}^{1/2}\\
    &=\paran{\E\paran{\log P(Y|X) + H(Y|X)}^2}^{1/2}\\
    \rho &:=\E\paran{\card{\log P(Y|X) + H(Y|X)}^3}.
\end{align*}
\label{thm:nonasymptotic_transductive}
\vspace{-5mm}
\end{theorem}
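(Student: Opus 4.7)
The plan is to combine Theorem \ref{thm:verdu-han-transductive} with the Berry-Esseen central limit refinement applied to the log-likelihood of the test labels. This mirrors the finite-blocklength converse strategy of \cite{polyanskiy_channel_2010}: the conformal-prediction analogue of a channel code lower bound becomes a concentration bound on the information density $-\log P(Y_{m+1}^{m+n}\mid X_{m+1}^{m+n})$.

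First, I would apply Theorem \ref{thm:verdu-han-transductive} with the specific choice $\beta = \Delta\,e^{-n\gamma_{n,m}} = \Delta / \E|\Gamma^{\alpha}(Z_1^m,X_{m+1}^{m+n})|$, so that the term $\beta\,\E|\Gamma^{\alpha}|$ collapses to $\Delta$. This yields
\begin{equation*}
\P\!\left(P(Y_{m+1}^{m+n}\mid X_{m+1}^{m+n}) \leq \Delta\,e^{-n\gamma_{n,m}}\right) \;\leq\; \alpha + \Delta,
\end{equation*}
which, after taking logarithms inside the probability, reads
\begin{equation*}
\P\!\left(-\log P(Y_{m+1}^{m+n}\mid X_{m+1}^{m+n}) \;\geq\; n\gamma_{n,m} - \log\Delta\right) \;\leq\; \alpha + \Delta.
\end{equation*}
The admissibility condition in the theorem statement ensures $\beta\in(0,1)$ so that the application of the preceding theorem is legitimate.

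Second, I would invoke the i.i.d.\ structure of the test pairs to write $-\log P(Y_{m+1}^{m+n}\mid X_{m+1}^{m+n}) = \sum_{i=1}^{n} -\log P(Y_{m+i}\mid X_{m+i})$, a sum of i.i.d.\ random variables with mean $H(Y\mid X)$, variance $\sigma^{2}$, and centered absolute third moment $\rho$. Berry-Esseen then gives a uniform bound between the distribution of the normalized sum and the standard normal, implying
\begin{equation*}
\P\!\left(\sum_{i=1}^{n}-\log P(Y_{m+i}\mid X_{m+i}) \geq t\right) \;\geq\; \Q\!\left(\frac{t-nH(Y\mid X)}{\sqrt{n}\,\sigma}\right) - \frac{\rho}{\sqrt{n}\,\sigma^{3}}.
\end{equation*}
Setting $t = n\gamma_{n,m}-\log\Delta$ and chaining this with the inequality from the first step yields
\begin{equation*}
\Q\!\left(\frac{n\gamma_{n,m}-\log\Delta - nH(Y\mid X)}{\sqrt{n}\,\sigma}\right) \;\leq\; \alpha + \Delta + \frac{\rho}{\sqrt{n}\,\sigma^{3}}.
\end{equation*}
The final step is to invert $\Q$, which is strictly decreasing, under the hypothesis that its argument lies in $[0,1]$; rearranging produces exactly the stated non-asymptotic bound.

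The routine obstacles are bookkeeping: verifying that the chosen $\beta$ satisfies $\beta\in(0,1)$ (which is why the hypothesis on $\alpha+\rho/(\sqrt n\sigma^3)+\Delta$ and the positivity of $\Delta$ are needed) and keeping the direction of the Berry-Esseen inequality correct when applied to a tail rather than a CDF. The substantive step is recognizing that the specific choice $\beta = \Delta\,e^{-n\gamma_{n,m}}$ converts the mixed efficiency-confidence inequality of Theorem \ref{thm:verdu-han-transductive} into a pure information-density tail bound, at which point a one-line application of Berry-Esseen and inversion of $\Q$ closes the argument; identifying the right value of $\beta$ is really where all the work lies.
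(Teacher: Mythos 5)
Your proof is correct and follows essentially the same route as the paper's: apply Theorem~\ref{thm:verdu-han-transductive}, control the information-density tail of $\sum_i -\log P(Y_{m+i}\mid X_{m+i})$ with Berry--Esseen, and invert $\Q$. The only difference is cosmetic parametrization --- you pin $\beta = \Delta e^{-n\gamma_{n,m}}$ so that $\beta\,\E|\Gamma^{\alpha}|$ collapses to $\Delta$ up front, whereas the paper chooses $\beta$ to make the $\Q$-term equal a free $\epsilon$ and only at the end sets $\epsilon = \alpha + \rho/(\sqrt{n}\sigma^{3}) + \Delta$; the resulting algebra and bound are identical.
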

The non-asymptotic results leverage the Berry-Esseen central limit theorem to characterize the sum $\sum_{i=1}^n\log\P(Y_{m+i}|X_{m+i})$ in Theorem \ref{thm:verdu-han-transductive}. We call the term $\sigma$, \textit{the dispersion} following a similar name used in finite block length analysis of Shannon capacity \citep{polyanskiy_channel_2010,Strassen1962-zp}.  Note that these bounds do not assume anything about the underlying predictor, and therefore do not show any dependence on the number of training samples $m$. The underlying method might as well have access to the underlying distributions $P(Y|X)$. To use the above bound, we provide an approximation by ignoring some constant terms that diminish with larger $n$. The approximate bound can be easily computed and is given as follows (see Appendix \ref{proof:thm:nonasymptotic_transductive} for the derivation):
\begin{empheq}[box=\Ovalbox]{align}
 {  n\gamma_{n,m}\geq nH(Y|X)+
 \sqrt{n}\sigma
  \Q^{-1}\paran{\alpha}  -\frac{\log n }{2} + O(1).}
  \label{eq:final_approximation}
\end{empheq}
\paragraph{On achievability of the bounds.} In this part, we argue that the provided bounds are achievable. 
Suppose that we know the underlying probability distribution $P(Y|X)$, which corresponds to the idealized setting in \citet{vovk_algorithmic_2022}. Upon receiving test samples $X_{1},\dots,X_{n}$, we can construct the confidence sets as follows:
\[
\Gamma^{\alpha}(X_1^n):= \left\{(y_1,\dots,y_n): \prod_{i=1}^n P(y_i|X_i) \geq \beta \right\}.
\]
Similarly, the efficiency rate is defined as $\gamma_n := \frac{1}{n}\log \E[|\Gamma^{\alpha}(X_{1}^{n})|]$. 
We show that with proper choice of $\beta$ we can achieve the lower bound, ignoring the logarithmic terms, at a given significance level $\alpha$. The definitions of $\rho$ and the dispersion $\sigma$ are similar to those in Theorem \ref{thm:nonasymptotic_transductive}.
\begin{theorem}
For the confidence set $\Gamma^{\alpha}(X_1^n)$ defined above, and for $\alpha\geq \rho/\sqrt{n}\sigma^3$, there is a choice of $\beta$  that achieves the confidence $1-\alpha$ at the efficiency rate $\gamma_n$ satisfying:
\[\label{thm:achievable}
n\gamma_n \leq n H(Y|X) + 
\sqrt{n}\sigma  \mathrm{Q}^{-1}(\alpha) + O(1).
\]
\end{theorem}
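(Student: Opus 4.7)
The plan is to choose the threshold $\beta$ so that the error event has probability at most $\alpha$, and then to control $|\Gamma^{\alpha}(X_1^n)|$ via a direct counting argument: every label tuple inside the set has joint conditional probability at least $\beta$, so the set's cardinality is at most $1/\beta$.

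First, I would rewrite the error event. Since the predictor depends only on $X_1^n$, the event $\{(Y_1,\dots,Y_n)\notin\Gamma^{\alpha}(X_1^n)\}$ coincides with $\{\prod_{i=1}^n P(Y_i|X_i)<\beta\}$. Introducing the negative log-likelihood sum $T_n := -\sum_{i=1}^n \log P(Y_i|X_i)$, this is the event $\{T_n > \log(1/\beta)\}$. Under $P$, $T_n$ is a sum of $n$ i.i.d. terms with mean $nH(Y|X)$, variance $n\sigma^2$, and centered third absolute moment $n\rho$, so the Berry--Esseen inequality applied to the standardized sum yields
$$\P\paran{T_n > nH(Y|X) + \sqrt{n}\sigma\, t} \leq \Q(t) + \frac{\rho}{\sqrt{n}\sigma^3}$$
for every $t\in\R$. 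Setting $t^{\star}:=\Q^{-1}(\alpha - \rho/(\sqrt{n}\sigma^3))$, which is well defined precisely because the assumption $\alpha\geq \rho/(\sqrt{n}\sigma^3)$ holds, and choosing $\log(1/\beta) := nH(Y|X) + \sqrt{n}\sigma\, t^{\star}$ makes the error probability at most $\alpha$, so the predictor has confidence $1-\alpha$.

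Next, I would bound the efficiency. For every realization of $X_1^n$, each $y_1^n\in\Gamma^{\alpha}(X_1^n)$ satisfies $\prod_{i=1}^n P(y_i|X_i)\geq\beta$. Since $\sum_{y_1^n\in\gY^n}\prod_i P(y_i|X_i) = \prod_i\sum_{y_i} P(y_i|X_i) = 1$, no more than $1/\beta$ tuples can meet the threshold, hence $|\Gamma^{\alpha}(X_1^n)|\leq 1/\beta$ pointwise. Taking expectations gives
$$n\gamma_n = \log\E[|\Gamma^{\alpha}(X_1^n)|] \leq \log(1/\beta) = nH(Y|X) + \sqrt{n}\sigma\, \Q^{-1}\paran{\alpha - \frac{\rho}{\sqrt{n}\sigma^3}}.$$

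Finally, to reach the claimed form, I would Taylor expand $\Q^{-1}$ around $\alpha$. Since $\Q^{-1}$ is smooth with locally bounded derivative on any compact sub-interval of $(0,1)$ and the shift is $O(1/\sqrt{n})$, we have $\Q^{-1}(\alpha - \rho/(\sqrt{n}\sigma^3)) = \Q^{-1}(\alpha) + O(1/\sqrt{n})$, and multiplying by $\sqrt{n}\sigma$ folds this into an additive $O(1)$ term. The main obstacle is mostly bookkeeping: aligning the Berry--Esseen slack with the Taylor remainder and tracking the implicit constant as $\alpha$ approaches the boundary of the feasible range, which is exactly where the assumption $\alpha\geq \rho/(\sqrt{n}\sigma^3)$ becomes active. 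Apart from that, the argument is elementary once one notices the counting bound $|\Gamma^{\alpha}|\leq 1/\beta$, which is the converse face of the inequality underlying Theorem \ref{thm:verdu-han-transductive}.
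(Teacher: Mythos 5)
Your proposal is correct and follows essentially the same route as the paper's proof in Appendix \ref{proof:supp:achievability}: choose $\beta$ via Berry--Esseen so that $\P(P(Y_1^n|X_1^n)<\beta)\le\alpha$, bound the set size by $1/\beta$ (the paper's Lemma \ref{lemma:set_size} states this in expectation, your pointwise version via $\sum_{y_1^n}\prod_i P(y_i|X_i)=1$ is the same idea), and absorb the $Q^{-1}(\alpha-\rho/(\sqrt{n}\sigma^3))$ shift into the $O(1)$ term. Your remark about the constant degrading as $\alpha$ approaches $\rho/(\sqrt{n}\sigma^3)$ is a fair caveat that the paper also implicitly relies on.
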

As it can be seen, knowing the underlying probability distribution, the prediction set size can be bounded, and therefore, the achievability bound matches the first and second order term in the converse bound. For the proof and more details see Appendix \ref{proof:supp:achievability}.

Although it is reassuring that these bounds are achievable in a idealized setting,  the conditional distribution $P(Y|X)$ is in general unknown. Nonetheless, this result provides a guideline on how to build prediction sets: \textit{approximate the conditional distribution and use it for building the prediction set}. In the context of classical conformal prediction, the authors of \citet{sadinle2019least} used k-nearest neighbors, local polynomial estimator and regularized multinomial logistic regression to approximate the conditional distributions and use it for prediction. See Appendix \ref{app:optimal_conf_predictor} for prior works on optimal confidence predictors.

A natural question is how the efficiency rate changes when an approximate conditional distribution $Q$ is used to build the prediction sets following the guideline: $
\Gamma_Q^{\alpha}(x_{1}^{n}):=\{y_{1}^{n}: {Q(y_{1}^{n}|x_{1}^{n})} \geq \beta \}.$ 
The following theorem characterizes the efficiency rate in terms of different divergences between $P(Y|X)$ and $Q(Y|X)$. 

\begin{theorem}
Consider the confidence set $\Gamma_Q^{\alpha}(X_1^n)$ defined above. Then, there is a choice of $\beta$  that achieves the confidence $1-\alpha$ at the efficiency rate $\gamma_n$ satisfying:
\begin{align*}
n\gamma_n \leq & n H(Y|X) + n \E[D(P(\cdot|X)\Vert Q(\cdot|X))] +
\sqrt{n}\sigma  \mathrm{Q}^{-1}(\alpha) + O(\log n),
\end{align*}

with $ \sigma  =\paran{\mathrm{Var}\paran{\log Q(Y|X)}}^{1/2}$ and $\rho=\E\paran{\card{\log Q(Y|X) -\mu}^3}$, subject to defined 
$\alpha\geq \rho/\sqrt{n}\sigma^3$. All expectations are with respect to the data distribution $P$.
\label{thm:approx_achievable}
\end{theorem}
Therefore, the key penalty for the distribution mismatch is the expected KL-divergence term $\E[D(P(\cdot|X)\Vert Q(\cdot|X))]$. See Appendix \ref{app:approx_dist} for the proof and more details. 

\paragraph{How to use the theory.} 
Our theoretical bounds can be used as a guideline to evaluate the efficiency of transductive methods in synthetic settings. In such cases, where  the conditional probability is known, and the bounds are computable, we can evaluate how close the existing transductive methods are to the fundamental bound. This is in analogy with research in the field of information theory where binary symmetric channels, erasure channels and AWGN channels are used as benchmark for designing error correction codes. We use this vision in our numerical result section, where we evaluated the Bonferroni method and showed its shortcomings.  Note that the information-theoretic terms $H(Y|X)$ and $\sigma$ are in general unknown. If we approximate the conditional distribution and use it both for approximating these terms and constructing the prediction sets, then based on the above discussion, the lower and upper bounds will coincide and provide no additional insight.

Finally, the results of Theorem \ref{thm:achievable} and \ref{thm:approx_achievable} highlight the centrality of (approximate) conditional distribution in constructing the optimal prediction sets. Therefore, there is a close connection between the conditional distribution and the (non)conformity score in conformal prediction. We explore this connection in the next section, where approximate model probabilities are used as a conformity score. 

\section{ Transductive Split Conformal Prediction}

Fully transductive conformal prediction given by Vovk \citet{Vovk2013-transductive-pred} is computationally prohibitive. Similar to the classical full CP, it requires multiple \textit{retraining} of the model, for which the complexity grows exponentially with the number of test points. To avoid the exponential growth penalty, Vovk suggested to use  Bonferroni correction \citet{Vovk2013-transductive-pred}. Such a predictor constructs a $p$-value for each of the $n$ test samples separately and aggregates these $p$-values, $p_1,\dots,p_n$ using Bonferroni formula:
\[
p := \min(np_1,\dots,np_n,1).
\]
All the labels $(v_1,\dots,v_n)$ with $p$-values larger than the inconfidence $\alpha$ are included in the prediction set. Note that $p$-values can be obtained with a nonconformity score \cite{vovk_algorithmic_2022}. Therefore, Bonferroni prediction amounts to running standard conformal prediction for each test sample with the modified confidence level $\alpha/n$ and getting the set product of the predicted sets (see Appendix \ref{supp:tcp_vovk} for more details). We will use the Bonferroni-corrected version of \ac{SCP} \citet{Papadopoulos2002-hw} and \ac{APS} \cite{Romano2020-adaptive} for the experiments. As we will see, Bonferroni method is very inefficient for larger $n$.  

\paragraph{Constructing conformity metrics from model scores.} We saw in Theorem \ref{thm:achievable} that the optimal prediction set can be constructed using the true conditional distribution and even with the approximate version of it. The main idea was to select a threshold $\beta$ based on the confidence $\alpha$ and then include all labels $(y_1,\dots,y_n)$ with the product probability $\prod_{i=1}^n P(y_i|x_i)$ above the threshold. We propose \ac{TSCP}, which leverages this intuition. 

\paragraph{TSCP.} Consider a model $f$ with the score of class $y$ and sample $x$ given by $f(x)[y]$. This can be seen as an approximate conditional distribution, and therefore, the product $\prod_{i=1}^n f(x_i)[y_i]$ can be used as the conformity score. The rest of the procedure is similar to \ac{SCP} \citet{Papadopoulos2002-hw}. First, on the calibration data, we compute the scores $s_i = 1 - \prod_{j=1}^n f(x^i_j[y^i_j])$ for $i=1,\dots, N_{\text{cal}}$. Next, we compute the finite sample corrected $\left(1 - \alpha\right)\left(1 + \frac{1}{N_{\text{cal}}}\right)$-quantile denoted by $q_\alpha$. For a new set of test samples $(x_1,\dots,x_n)$, include all the label pairs $(y_1,\dots,y_n)$ that satisfy $\prod_{j=1}^n f(x_j)[y_j] \geq 1-q_\alpha$. This method is detailed in Algorithm \ref{alg:split_cp}.  Note that the naive search over all possible label pairs will inflict the complexity of $O(\card{\gY}^n)$. However, we can leverage the compositional nature of the product score and implement it using dynamic programming. This approach prunes low probability branches  early on and reduces the complexity. We use this variant, presented in Appendix \ref{supp:algo_tscp}, Algorithm \ref{alg:tcp_split_dp}.

\begin{algorithm}[t]
\caption{Transductive Split Conformal Prediction (TSCP) with Score $f(x)[y]$}
\label{alg:split_cp}
\begin{algorithmic}[1]
\Require Calibration data $\mathcal{D}_{\text{cal}}=\{( (x^{i}_1, y^{i}_1),\dots,(x^{i}_n, y^{i}_n) )\}_{i=1}^{N_{\text{cal}}}$, miscoverage level $\alpha \in (0,1)$
\State Compute nonconformity scores on the calibration set:
$$s_i = 1 - \prod_{j=1}^n f(x^i_j)[y^i_j]), \quad \forall  (x^{i}_1, y^{i}_1),\dots,(x^{i}_n, y^{i}_n) \in \mathcal{D}_{\text{cal}}$$
\State Compute the empirical finite sample corrected $(1-\alpha)$ quantile:
$$
q_{\alpha} = \text{Quantile}_{\left(1 - \alpha\right)\left(1 + \frac{1}{N_{\text{cal}}}\right)} \left(\{s_i\}_{i=1}^{N_{\text{cal}}}\right)
$$
\Function{Predict}{$x_1,\dots,x_n$}
    \State Initialize prediction set $C(x) = \emptyset$; set $\tau\to 1-q_{\alpha}$
    \For{each label group $(y_1,\dots,y_n) \in \mathcal{Y}^{n}$}
        \If{$\prod_{j=1}^n f(x_j)[y_j] \geq \tau$}
            \State $C(x) \gets C(x) \cup \{(y_1,\dots,y_n)\}$
        \EndIf
    \EndFor
    \State \Return $C(x)$
\EndFunction
\end{algorithmic}
\end{algorithm}

\section{Related Works}
Conformal prediction \citep{vovk_algorithmic_2022} is a framework for confidence predictors with distribution-free coverage guarantees that rely only on the assumption that the samples are exchangeable. Some notable examples are split conformal prediction \citep{Papadopoulos2002-hw,lei2018distribution}, adaptive conformal prediction \citep{Romano2020-adaptive}, weighted conformal prediction \citep{tibshirani_conformal_2019,lei_conformal_2021} and localized conformal prediction \citep{Guan2023-localized-cp}---see \citet{angelopoulos_gentle_2021} for more details. Transdutive learning was introduced in  \citet{Gammerman1998-learning-transduc}, while transductive conformal prediction (TCP) was proposed in \citet{Vovk2013-transductive-pred} to generalize conformal prediction to multiple test examples. It is in this sense that we understand transductive learning. \citet{Vovk2013-transductive-pred} also discussed Bonferroni predictors as an information-efficient approach to transductive prediction. For a historical anecdote on the variations on the notion of transductive learning, going back to Vapnik, see 4.8.5 in \citet{vovk_algorithmic_2022}. Applications of transductive learning have been explored in ranking \citep{Fermanian2025-transduc-ranking}, which in itself includes many other use cases. Theoretical aspects of TCP were studied in \citet{Gazin2024-transduc-adaptive}, where the joint distribution of $p$-values for general exchangeable scores is derived. Although the applications of transductive learning are nascent, it provides a more general framework for studying confidence predictors.

Work on conformal prediction has focused on marginal and conditional guarantees, $p$- and $e$-value distributions, and extensions such as handling non-exchangeable samples \citep{Angelopoulos2024-theoretical,Foygel_Barber2021-limits, Gazin2024-transduc-adaptive,Vovk2012-ls, Bates2023-p-value, Marques-F2025-su, Vovk2024-seq-e-values,Vovk2023-tr,Grunwald2024-ra,Gauthier2025-qa}. In particular, two open research directions are relevant for this paper: first, the connection with hypothesis testing, and second, the theoretical bounds on the efficiency of conformal prediction.

In \citep{Correia2024-zq}, confidence prediction is framed as a list decoding problem in information theory. In that light, confidence predictors can be seen as list decoding for hypothesis testing with empirically observed statistics. The problem of Bayesian $M$-ary hypothesis testing with list decoding has been considered in \citet{asadi_kangarshahi_minimum_2023}, but assuming known probabilities and fixed list sizes.

Finally, on the efficiency of conformal prediction, \cite{Correia2024-zq} used the data processing inequality for $f$-divergences to get a lower bound on the logarithm of the expected prediction set size that mainly depends on the conditional entropy. In this work, we extend this study using a different class of bounds on hypothesis testing. Numerous information-theoretic bounds exist for hypothesis testing \citep{verdu_general_1994,Han2014-information-spectrum,polyanskiy_channel_2010,polyanskiy_arimoto_2010, Poor1995-mp, Chen2012-cf} with applications in finite-block-length analysis of Shannon capacity and source coding. Our bound in Theorem \ref{thm:verdu-han-transductive} generalizes \citet{verdu_general_1994} to variable-size list decoding; an extension to fixed-size list decoding is given in \citet{afser_statistical_2021}.

\begin{figure*}[ht!]
    \centering
    \includegraphics[width=\textwidth]{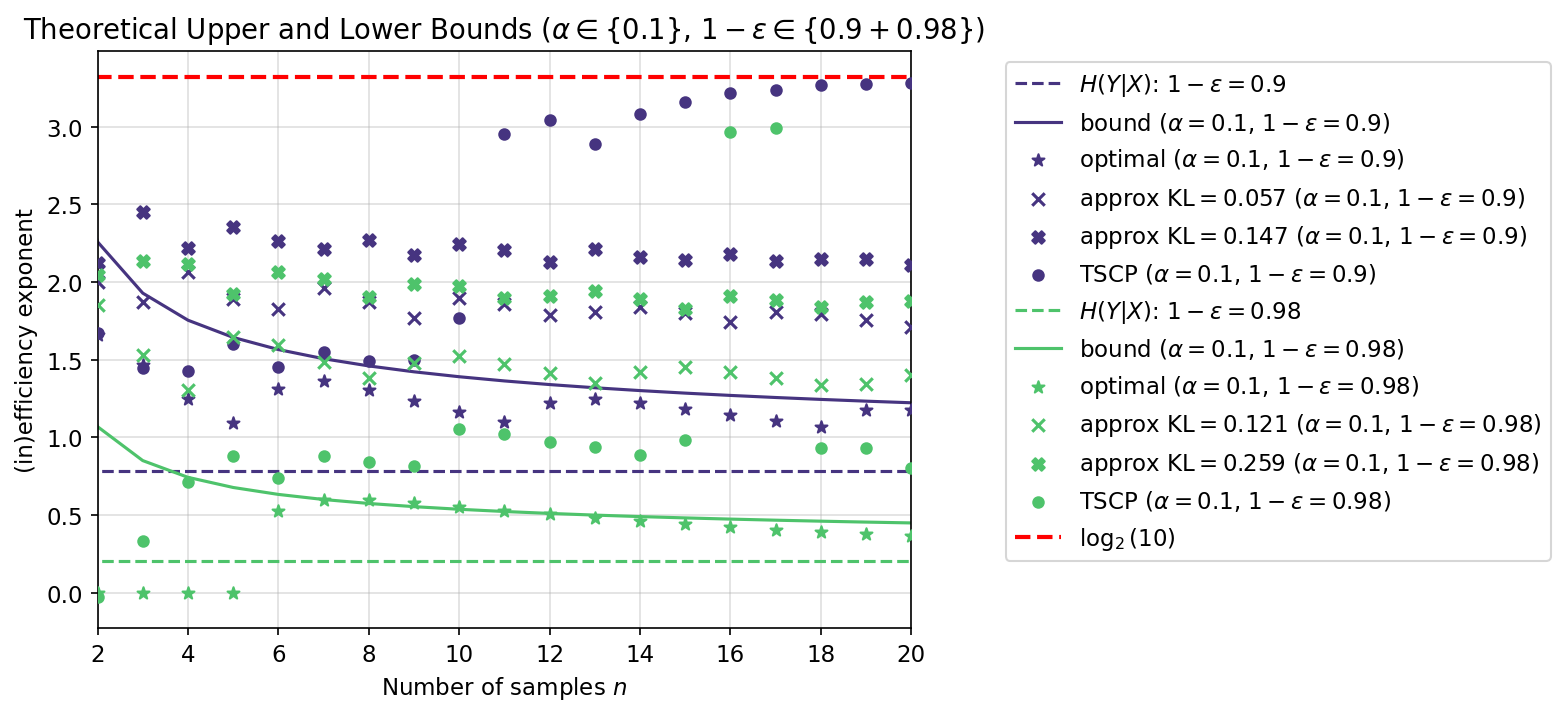}
    \caption{Comparison against theoretical bounds. The solid curve represents the predicted inefficiency bound, while the star markers indicate achievable points under the true conditional distribution. We additionally report the efficiency obtained when using approximate conditional distributions with controlled KL divergence from the ground truth. The performance of \ac{TSCP} is overlaid for direct comparison, demonstrating its alignment with the theoretical predictions.}
    \label{fig:theoretical_bound}
\vspace{-4mm}
\end{figure*}

\begin{figure*}[ht!]
    \centering
    \includegraphics[width=0.9\textwidth]{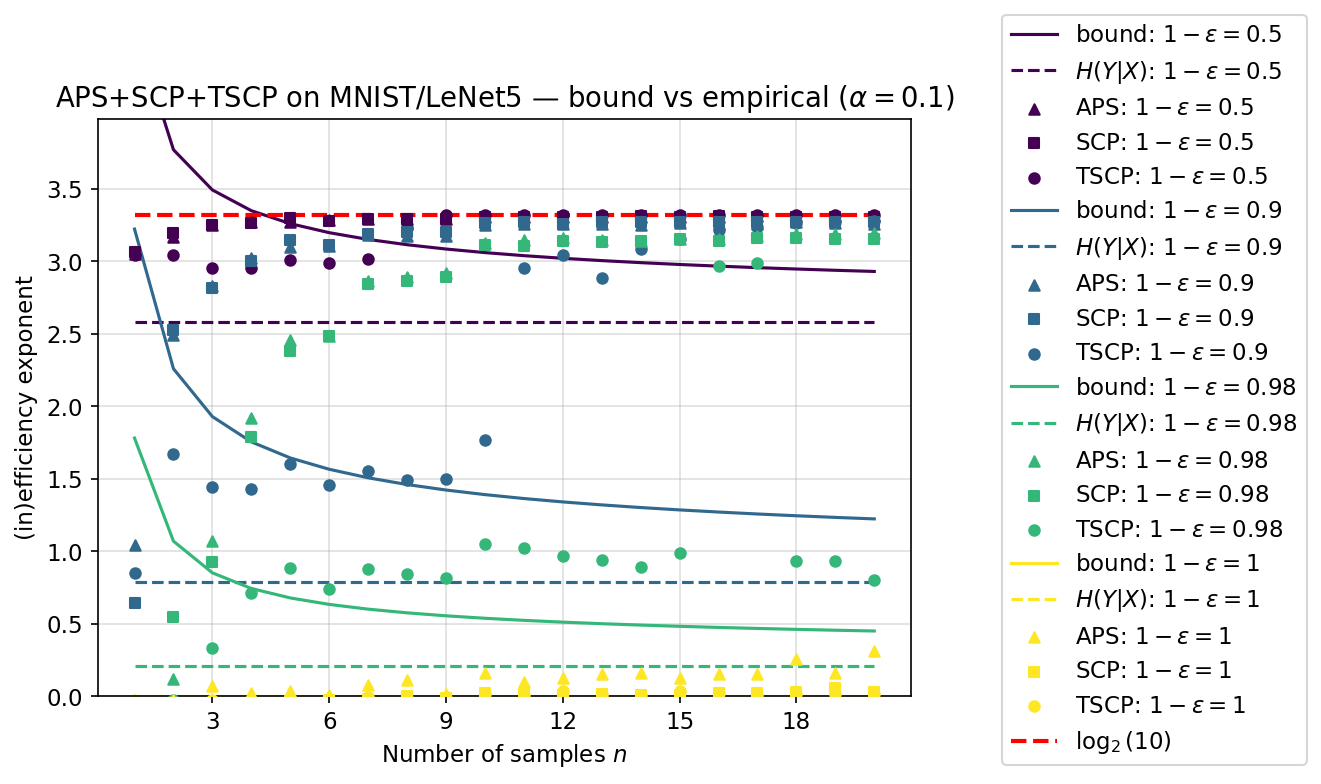}
    \caption{
    Comparison of the inefficiency exponent $\gamma_n$ for Bonferroni-corrected \ac{SCP}, \ac{APS}, and the proposed \ac{TSCP}. Our \ac{TSCP} closely tracks the theoretical bounds, whereas competing methods exhibit rapid degradation, manifested as exponential growth in inefficiency, with increasing label noise.
    }
    \label{fig:mnist_tscp_w_all}
\vspace{-3mm}
\end{figure*}
\section{Numerical Results}
In this section, we plot our theoretical bounds, and compare them with the bound obtained using Bonferroni-corrected conformal prediction methods, and the transductive SCP, which we introduced in the previous section. We have use various vision datasets (MNIST, FashionMNIST, CIFAR10, CIFAR100) with noisy labels to have control over the uncertainty: each label is kept with probability $1-\epsilon$ and changed to another class with probability $\epsilon/(N-1)$, where $N$ is the total number of classes. For this setup, we can easily compute $H(Y|X)$ and $\sigma$. For each dataset and noise level, we have trained a separate model: LeNet5 for MNIST and ResNet20 for the rest of the models. 

We use the efficiency rate $\gamma_{n,m}$ as the main measure. For the nonasymptotic bound, we use $H(Y|X)+\sigma Q^{-1}\paran{\alpha}/\sqrt{n}-\log_2 n/2n$, omitting constant terms $O(1)/n$ that vanish as $n$ grows. We use base 2 for the logarithms in the experiments. For the practical transductive experiments, we fix the number of samples in the calibration set to 180, which is resampled from the 0.8 portion of the validation set, while the rest is used for the test. For \ac{TSCP}, we devide the calibration set into groups of $n$ samples, each used to compute the conformity score. We have selected MNIST plots for the main paper, although the plots for other experiments carry a similar message. See Appendix \ref{supp:experiments} for more details.

\paragraph{Efficiency-Confidence trade-off.}  We plot the efficiency rate $\gamma_{n,m}$ as a function of the number of test sample $n$ (Theorem \ref{thm:nonasymptotic_transductive}) in Figure \ref{fig:theoretical_bound}. We have also plotted the achievability bound for known and approximate conditional distribution. We have perturbed the probability distribution with $\delta=0.015$ and $0.03$ and re-normalized it, which corresponds to the KL-divergence of $0.121$ and $0.259$. There is a persistent gap between conditional entropy (dashed line) and our bound (solid line), which closes only slowly. Even for hundreds of samples, our bound provides a better guideline for the efficiency rate (see Appendix \ref{supp:experiments} for more plots on the theoretical bounds). We have also included our \ac{TSCP} method in the plot, which for small $n$ aligns with the theoretical bound.

\paragraph{Bonferroni correction vs. Transductive SCP.} 
We compared our method with other transductive methods in Figure \ref{fig:mnist_tscp_w_all}. We used the Bonferroni-corrected predictor as explained in \citet{Vovk2013-transductive-pred}; see Appendix \ref{supp:tcp_vovk} for the  details. We have used Bonferroni-corrected \ac{SCP} \cite{Papadopoulos2002-hw} and \ac{APS} \cite{Romano2020-adaptive} for these experiments. First, note that Bonferroni prediction methods become inefficient quickly as $n$ increases. Particularly because the per-test confidence becomes more stringent. For example, for $\alpha=0.1$ and $n=20$, we need to have a confidence level of $0.005$ per sample. With a limited calibration set size, this quickly approaches the inefficient set prediction containing most labels. In contrast, our \ac{TSCP} method does not suffer from such a blow-up, and for $n<10$, it follows the theoretical bound. Since we are using a total of 180 samples in the calibration set, $n=10$ amounts to a transductive calibration set size of 18, which is quite small. This explains the similar, but still more graceful, blow up of \ac{TSCP} for larger $n$. Secondly, our approximation can be loose for smaller $n$ because of the ignored constant terms and relaxing the assumption $\alpha + \rho/\sqrt{n}\sigma^3 + \Delta\in[0,1]$.  
For larger $n$, the impact of these terms diminishes, and our approximate lower bound holds, providing a better lower bound than the conditional entropy. We have included a more extensive comparison of efficiency-coverage in Table \ref{tab:coverage_efficiency}, where \ac{TSCP} dominates the efficiency rate of almost all the existing Bonferroni-corrected schemes. Note that the effective calibration set size of \ac{TSCP} is $1/n$ of the Bonferroni corrected one, therefore, impacting the coverage. 

\begin{table*}[t]
\centering
\small
\caption{
Efficiency comparison of SCP, APS and TSCP for (MNIST: LeNet5), and (FashionMNIST, CIFAR10, CIFAR100: ResNet20) for different transductive sizes, $\alpha=0.1$, and $1-\epsilon=0.98$ as noise level. We compute the median and the expectation of the prediction-set size, and convert it to the efficiency rate using $\log_2(\cdot)/n$; coverage (Cov.) is the empirical joint coverage over the test batch.
}
\label{tab:coverage_efficiency}
\setlength{\tabcolsep}{4.5pt}
\begin{tabular}{lccc|ccc|ccc}
\toprule
\multirow{2}{*}{\textbf{Method}}
& \multicolumn{3}{c|}{$n=3$}
& \multicolumn{3}{c}{$n=6$} 
& \multicolumn{3}{c}{$n=9$} \\
\cmidrule(lr){2-4}\cmidrule(lr){5-7}\cmidrule(lr){8-10}
& $\gamma_{n,m}$ & Median & Cov.
& $\gamma_{n,m}$ & Median & Cov.
& $\gamma_{n,m}$ & Median & Cov. \\
\midrule
SCP (MNIST)
& 0.9261 & 0.0000 & 0.9400
& 2.4865 & 1.2974 & 0.9400
& 2.8948 & 2.3454 & 0.9000 \\

APS (MNIST)
& 1.0683 & 0.0000 & 0.8400
& 2.4883 & 1.5429 & 0.9000
& 2.9187 & 2.5481 & 0.9200 \\

\midrule
\textbf{TSCP (MNIST)}
& \textbf{0.3333} &  {0.0000} & {0.8400}
& \textbf{0.7410} &  {0.6950} & {0.8800}
& \textbf{0.8133} &  {0.6379} & {0.9600} \\
\bottomrule
\midrule
SCP (FMNIST)
& 1.4042 & 0.3333 & 0.8800
& 2.7644 &  1.6912 & 0.8200
& 3.1409 & 2.4008 & 0.9600 \\

APS (FMNIST)
& 1.3742 & 0.7740 & 0.9200
& 2.6491 & 1.9593 & 0.9000
&  3.0022 & 2.5863 & 0.8800 \\

\midrule
\textbf{TSCP (FMNIST)}
& \textbf{0.7873} &  {0.3333} & {0.9000}
& \textbf{1.0648} &  {0.8809} & {0.9000}
& \textbf{1.1134} &  {0.8809} & {0.9200} \\
\bottomrule
\midrule
SCP (CIFAR10)
& 1.6788 & 0.8617 & 0.9000
& 2.7067 & 1.5460 & 0.9200
& 2.9777 & 2.5339 & 0.9200 \\

APS (CIFAR10)
& 1.5871 & 0.9358 & 0.9000
& 2.5133 & 1.7559 & 0.8600
& 2.9327 & 2.5280 & 0.9000 \\

\midrule
\textbf{TSCP (CIFAR10)}
& \textbf{0.9952} &  {0.8617} & {0.8600}
& \textbf{1.3562} &  {1.1167} & {0.9200}
& \textbf{1.2414} & { 1.1942} & {0.9400} \\
\bottomrule
\midrule
SCP (CIFAR100)
& 4.8998 & 3.6475 & 0.9600
& 6.2604 & 4.8781 & 0.9200
& 6.5391 &  {6.6439} & 0.9200 \\

APS (CIFAR100)
& 4.9237 & 3.6082 & 0.9000
& 6.0093 & 4.8461 & 0.9400
& \textbf{6.2943} &  {5.3957} & 0.9800 \\

\midrule
\textbf{TSCP (CIFAR100)}
& \textbf{3.4176} & {2.8171} & {0.8800}
& \textbf{5.8699} & { 2.8573} & {0.9200}
&  {6.6081} & {6.6439} & {0.8800} \\
\bottomrule
\bottomrule
\end{tabular}

\vspace{0.5em}
\parbox{0.92\linewidth}{\footnotesize
\textbf{Notes.} The upper bound on the efficiency rate is $\log_2(\text{number-of-classes})$, which is 3.3219 for 10 classes, and 6.6438 for 100 classes.
}
\vspace{-5mm}
\end{table*}

\section{Conclusion}
We established new theoretical bounds that characterize the trade-off between efficiency and confidence in transductive conformal prediction, revealing fundamental limitations that hold across transductive confidence predictors. Our results show that any non-trivial confidence level necessarily leads to exponential growth in prediction set size, with rates governed by conditional entropy and dispersion. Practitioners can use this framework to evaluate the efficiency of transductive methods in settings where these information-theoretic quantities are computable. 

Our analysis also exposes the inefficiency of Bonferroni-based approaches and motivates more principled transductive predictors. Along this line, we introduced transductive split conformal prediction (TSCP), which empirically achieves efficiency rates aligned with the theoretical characterization. Future work includes extending these bounds beyond the i.i.d.\ setting and further investigating statistical and computational trade-offs in transductive inference.

\bibliography{bibliography}
\bibliographystyle{plainnat}

\appendix

\section{Proofs of Section \ref{sec:eff_cof_trade_off}}
\label{sec:proof_off_cof_trade_off_section}
 In Shannon Theory, it is known that Fano's inequality cannot be used to prove strong converse results and establish the phase transition at the Shannon capacity.
Other inequalities in information theory involve the underlying probability distribution or information density terms \citep{verdu_general_1994,Han2014-information-spectrum}. For example, Theorem 4 in \citep{verdu_general_1994} states that:
    \[
\P(P(X|Y)\leq \beta ) \leq \epsilon + \beta,
    \]
where $\epsilon$ is the error probability of a code for a channel $P_{Y|X}$ and $\beta$ is an arbitrary number in $[0,1]$. These bounds can sometimes lead to tighter results compared to Fano's inequality, as indicated in \citep{polyanskiy_channel_2010}.  We start from this observations and derive the new inequalities for transductive conformal prediction. Note that these inequalities are new and do not follow from existing works.
\subsection{Proof of Theorem \ref{thm:verdu-han-transductive}}
\label{proof:thm:verdu-han-transductive}
\begin{proof}
    To simplify the notation for the proof, we denote $\mX:= X_{m+1}^{m+n}$, $\mY:=Y_{m+1}^{m+n}$ and $\mZ:=Z_{1}^{m}$. we assume $(\mX,\mY)$ are drawn from the distribution $P$.     For each $\mX\in \gX^n$, define the set:
    \[
    B_{\mX} = \{ \mY:P(\mY|\mX) \leq \beta \}.
    \]
    $P(\mY|\mX)$ is the conditional distribution induced by $P$.
    The proof follows the steps below:
    \begin{align*}
        \P( P(\mY|\mX) \leq \beta) & = \int P(B_{\mX}|\mX)P(\mX)d\mX  = \int P(\mX,B_{\mX})d\mX  = \int P(\mX,\mZ,B_{\mX})d\mX d\mZ\\
        & =  \int \int P(\mX,\mZ,B_{\mX} \cap \Gamma^{\alpha}(\mZ,\mX)^c)d\mX d\mZ  + \int \int P(\mX,\mZ,B_{\mX}\cap \Gamma^{\alpha}(\mZ,\mX))d\mX d\mZ\\
        &\leq  \int \int P(\mX,\mZ, \Gamma^{\alpha}(\mZ,\mX)^c)d\mX d\mZ  + \int \int P(\mX,\mZ,B_{\mX}\cap \Gamma^{\alpha}(\mZ,\mX))d\mX d\mZ \\
        &  \stackrel{(1)}{\leq}   \alpha +  \int \int P(\mX,\mZ,B_{\mX}\cap \Gamma^{\alpha}(\mZ,\mX))d\mX d\mZ \\
        & =  \alpha +  \int \int P(\mX,\mZ)P(B_{\mX}\cap \Gamma^{\alpha}(\mZ,\mX)|\mX,\mZ)d\mX d\mZ \\
        & =  \alpha + 
        \int \int P(\mX,\mZ)\paran{\sum_{\mY, \mY\in B_{\mX}\cap \Gamma^{\alpha}(\mZ,\mX)} P(\mY|\mX,\mZ)}d\mX d\mZ
        \\
        &  \stackrel{(2)}{\leq}  \alpha +  \int\int P(\mX,\mZ)\paran{\sum_{\mY, \mY\in B_{\mX}\cap \Gamma^{\alpha}(\mZ,\mX)}  \beta} d\mX d\mZ\\
        & \leq  \alpha +  \int P(\mX,\mZ) \beta |\Gamma^{\alpha}(\mZ,\mX)| d\mX d\mZ  = \alpha + \beta\E(|\Gamma^{\alpha}(\mZ,\mX)|).
    \end{align*}
The inequality (1) follows from the confidence assumption:
\[
P\paran{Y_{m+1}^{m+n}\notin \Gamma^{\alpha}(Z_{1}^{m},X_{m+1}^{m+n})} = \int \int P(\mX,\mZ, \Gamma^{\alpha}(\mZ,\mX)^c)d\mX d\mZ \leq \alpha.
\]
The inequality (2) follows from the independence of $\mZ$ and $(\mX,\mY)$ and the definition of $B_{\mX}$.
\end{proof}

\begin{remark} Note that Theorem \ref{thm:verdu-han-transductive} can be written as:
    \begin{equation}
         \sup_{\beta\in[0,1]}\frac{\P( P(Y_{m+1}^{m+n}|X_{m+1}^{m+n}) \leq \beta )  - \alpha}{\beta}\leq \E(|\Gamma^{\alpha}(Z_{1}^{m},X_{m+1}^{m+n})|) 
    \end{equation}
This bound means that the prediction set size is large even if for a single $\beta$, the left hand side is large.  As an example, suppose that the conditional probability distribution is not concentrated around a point and has large spread. In other words, the uncertainty is high, and we can find many labels with equally high but numerically small probabilities. In this case, we can choose a small $\beta$ that yields a very high probability  of $ P(Y_{m+1}^{m+n}|X_{m+1}^{m+n}) \leq \beta$. Therefore, the prediction set size scales mainly with with $1/\beta$ and is expected to be large as $\beta$ is small. Intuitively, the left hand side of Theorem \ref{thm:verdu-han-transductive} measures the intrinsic uncertainty and right hand side measures the prediction set size.
\end{remark}
\subsection{Proof of Theorem \ref{thm:fundamental_asymp_transductive}}

\begin{proof}
We use Theorem \ref{thm:verdu-han-transductive} to prove the results. The choice of $\beta$ can be important. Let's choose $\beta=e^{-n(H(Y|X)-\delta)}$ where $H(Y|X)$ is the conditional entropy, and $\delta$ is any non-negative number. With standard manipulations, we obtain the following result from Theorem \ref{thm:verdu-han-transductive}:
\begin{align}
     \P\paran{\frac{1}{n}\log {P(Y_{m+1}^{m+n}|X_{m+1}^{m+n})} \leq -H(Y|X)+\delta  }  \leq \alpha_n + e^{-n(H(Y|X)-\delta)}\E(|\Gamma^{\alpha}(Z_{1}^{m},X_{m+1}^{m+n})|).
\label{ineq:cond_entropy_density}
\end{align}
 Since the test samples are i.i.d., the term $\log {P(Y_{m+1}^{m+n}|X_{m+1}^{m+n})}$ can be decomposed as:
\[
\frac{1}{n}\log {P(Y_{m+1}^{m+n}|X_{m+1}^{m+n})} =\frac{1}{n} \sum_{i=1}^n
\log {P(Y_{m+i}|X_{m+i})}.
\]
From law of large numbers, as $n$ goes to infinity, the sum converges almost surely to the negative conditional entropy between the input $X$ and the label $Y$, namely $-H(Y|X)$. This means that the probability on the left hand side of \eqref{ineq:cond_entropy_density} goes to one. We have:
\[
1 \leq \liminf_{n\to\infty } (\alpha_n + e^{-n(H(Y|X)-\delta)}\mathbb{E}(|\Gamma^{\alpha}(Z_{1}^{m},X_{m+1}^{m+n})|)).
\]
We have two cases:
\begin{itemize}
    \item Case 1: if $\gamma^{+}_m < H(Y|X)$, then we have:
    \[
    \liminf_{n\to\infty}e^{-n(H(Y|X)-\delta)}\E(|\Gamma^{\alpha}(Z_{1}^{m},X_{m+1}^{m+n})|) = 0.
    \]
    This means that $\liminf_{n\to\infty}\alpha_n= 1$, which means $\lim_{n\to\infty}\alpha_n= 1$, and the confidence goes to zero.
\item Case 2: for non-trivial asymptotic confidence, $\limsup_{n\to\infty}\alpha_n$ is strictly below one. For the inequality to hold, the second term needs to be non-vanishing, that is $\gamma^{-}_m > H(Y|X)-\delta$, namely $\gamma^{-}_m \geq H(Y|X)$.
\end{itemize}
The proof follows accordingly.
\end{proof}
\subsection{Proof Of Theorem \ref{thm:nonasymptotic_transductive}}
\label{proof:thm:nonasymptotic_transductive}
\begin{proof}
We use Berry-Esseen central limit theorem for the proof. 
\begin{theorem}[Berry-Esseen] Let $X_i$, $i\in[n]$ be i.i.d. random
    variables with $\E(X_i)=\mu, \text{Var}(X_i)=\sigma^2, \rho=\E(|X_i-\mu|^3)$. Then we have for any $t\in\R$:
    \[
    \card{
    \P\paran{
    \frac{1}{\sqrt{n}}\sum_{i=1}^n\frac{(X_i-\mu)}{\sigma}\geq t
    }
    - Q(t)
    }\leq \frac{\rho}{\sqrt{n}\sigma^3}.
    \]
\label{thm:berry-esseen}
\end{theorem}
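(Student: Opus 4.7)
I would follow the classical characteristic-function route due to Esseen, which is the workhorse proof of the Berry-Esseen inequality. After normalizing $Y_i = (X_i - \mu)/\sigma$ so that $\E Y_i = 0$, $\Var Y_i = 1$, and $\beta_3 := \E|Y_i|^3 = \rho/\sigma^3$, write $S_n = n^{-1/2}\sum_{i=1}^n Y_i$ with CDF $F_n$ and characteristic function $\phi_n(t) = \varphi(t/\sqrt n)^n$, where $\varphi$ is the characteristic function of $Y_1$. The target reduces to the two-sided Kolmogorov bound $\sup_{x\in\R}|F_n(x) - \Phi(x)| \leq \beta_3/\sqrt n$, where $\Phi$ is the standard normal CDF, from which the one-sided tail form involving $\Q(\cdot)$ follows by taking complements.

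\textbf{Smoothing and CF estimate.} The first step is the Esseen smoothing lemma: for every $T > 0$,
\begin{equation*}
\sup_{x\in\R}|F_n(x) - \Phi(x)| \;\leq\; \frac{1}{\pi}\int_{-T}^{T}\left|\frac{\phi_n(t)-e^{-t^2/2}}{t}\right|\,dt + \frac{24}{\pi T\sqrt{2\pi}},
\end{equation*}
a standard Fourier-inversion bound that compares $F_n$ to $\Phi$ through their characteristic functions on a finite window. The second step is to control $|\phi_n(t)-e^{-t^2/2}|$ on that window. A Taylor expansion with integral remainder gives $\varphi(u) = 1 - u^2/2 + R(u)$ with $|R(u)| \leq \beta_3 |u|^3/6$; restricting to $|t| \leq T^\star := \sqrt n/(4\beta_3)$ keeps $|\varphi(t/\sqrt n)-1|$ safely below $1$ so that the principal branch of the logarithm is well defined, and expanding term-by-term yields $n\log\varphi(t/\sqrt n) = -t^2/2 + r_n(t)$ with $|r_n(t)| \leq C |t|^3\beta_3/\sqrt n$. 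Exponentiating and using $|e^z - 1| \leq |z|e^{|z|}$ then gives $|\phi_n(t)-e^{-t^2/2}| \leq C' e^{-t^2/4}|t|^3\beta_3/\sqrt n$ throughout the range $|t|\leq T^\star$.

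\textbf{Assembly.} Plugging $T = T^\star$ into the smoothing lemma, the integral is dominated by $(C'\beta_3/\sqrt n)\int_\R t^2 e^{-t^2/4}\,dt = O(\beta_3/\sqrt n)$, while the boundary term $24/(\pi T^\star\sqrt{2\pi})$ is also $O(\beta_3/\sqrt n)$ by the choice of $T^\star$. Summing the two pieces yields $\sup_x|F_n(x) - \Phi(x)| \leq C_0\,\rho/(\sqrt n\sigma^3)$ for some absolute constant $C_0$, which is the Berry-Esseen conclusion in its CDF form; the one-sided statement of the theorem follows from $\P(\bar S_n \geq t) = 1 - F_n(t^-)$ and $\Q(t) = 1 - \Phi(t)$.

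\textbf{Main obstacle.} The qualitative $1/\sqrt n$ rate drops out effortlessly from this pipeline, so the real work lies in constant-chasing to match the constant $1$ written on the right-hand side of the theorem statement. Trimming $C_0$ requires (i) splitting the smoothing integral into a small-$|t|$ piece, where the cubic Taylor estimate above is used, and a moderate-$|t|$ piece, where one bounds $|\varphi(t/\sqrt n)|$ directly by $e^{-ct^2/n}$ via a separate argument on $|\varphi|^2$, and (ii) replacing the crude smoothing lemma by the refined version of Shevtsova/Tyurin, which is where the sharpest known constants come from. For the downstream application in Theorem~\ref{thm:nonasymptotic_transductive} any absolute constant is sufficient, so this constant-optimization can be absorbed into the bookkeeping of the main bound while the dependence on $\rho$, $\sigma$, and $n$ is exactly as claimed.
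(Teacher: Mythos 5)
The paper does not actually prove this statement: Theorem~\ref{thm:berry-esseen} is imported as the classical Berry--Esseen central limit theorem and used as a black box inside the proof of Theorem~\ref{thm:nonasymptotic_transductive}, so there is no in-paper argument to compare yours against. Your sketch is the standard Esseen route---normalize, apply the smoothing inequality to transfer the Kolmogorov distance to a windowed integral of $|\phi_n(t)-e^{-t^2/2}|/|t|$, control the characteristic function by a third-order Taylor expansion on $|t|\leq c\sqrt{n}/\beta_3$, and assemble---and it is sound as a roadmap; the passage from the two-sided CDF bound to the one-sided tail form with $\Q(t)=1-\Phi(t)$ is also handled correctly. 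The one substantive point, which you already flag, is the constant: the statement as written in the paper has constant $1$ on the right-hand side, whereas the pipeline you describe yields only some absolute $C_0$ (Esseen's original argument gives roughly $7.59$), and getting below $1$ genuinely requires the refined smoothing inequalities and two-regime splitting of van Beek or Shevtsova/Tyurin (best known constant for the i.i.d.\ case is about $0.4748$, so the constant-$1$ form is true but not elementary). Since the downstream use in Theorem~\ref{thm:nonasymptotic_transductive} only needs the term $\rho/(\sqrt{n}\sigma^3)$ up to an absolute constant---any $C_0$ would simply propagate into the argument of $\Q^{-1}$ and into the $O(1)$ bookkeeping of \eqref{eq:final_approximation}---your proposal is adequate for the role this theorem plays in the paper, even without completing the constant optimization.
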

We use Theorem \ref{thm:verdu-han-transductive} as starting point:
\begin{align*}
\P\paran{ {P(Y_{m+1}^{m+n}|X_{m+1}^{m+n})} < \beta  }   & =
\P\paran{ \frac{1}{\sqrt{n}}\log{P(Y_{m+1}^{m+n}|X_{m+1}^{m+n})}  \leq \frac{1}{\sqrt{n}}\log\beta  }\\
& =
\P\paran{ \frac{1}{\sqrt{n}}\sum_{i=1}^n \log{P(Y_{m+i}|X_{m+i})}  \leq \frac{1}{\sqrt{n}}\log\beta  }\\
& =
\P\paran{ \frac{1}{\sqrt{n}}\sum_{i=1}^n \frac{(\log{P(Y_{m+i}|X_{m+i})}-\mu)}{\sigma}  \leq \frac{1}{\sigma\sqrt{n}}\log\beta -\frac{\sqrt{n}\mu}{\sigma} }\\
&\geq Q\paran{
\frac{-1}{\sigma\sqrt{n}}\log\beta +\frac{\sqrt{n}\mu}{\sigma}
}-\frac{\rho}{\sqrt{n}\sigma^3}
\end{align*}
where $\mu = -H(Y|X)$ and the last step follows from Berry-Esseen.
Now choose $ \beta = \exp\paran{
{n}\mu-Q^{-1}(\epsilon)\sigma \sqrt{n}
}$, which implies that:
\[
 Q\paran{
\frac{-1}{\sigma\sqrt{n}}\log\beta +\frac{\sqrt{n}\mu}{\sigma}
}=\epsilon
\]

We get the following simplified inequality for any $\epsilon$:
\begin{align*}
  \epsilon\leq \alpha + \frac{\rho}{\sqrt{n}\sigma^3} +
  \exp\paran{
{n}\mu-Q^{-1}(\epsilon)\sigma \sqrt{n} + n\gamma_{n,m}}
\end{align*}
choose $\epsilon=\alpha + \frac{\rho}{\sqrt{n}\sigma^3} + \Delta$ for any $\Delta>0$, such that $\epsilon\in(0,1)$. Then, we get the result:
\begin{align*}
  \log\Delta - {n}\mu+Q^{-1}(\alpha + \frac{\rho}{\sqrt{n}\sigma^3} + \Delta)\sigma \sqrt{n} \leq { n\gamma_{n,m}}.
\end{align*}
\end{proof}

\paragraph{Deriving An Approximate Bound.} Note that $Q(x)$ is non-increasing and $(1/\sqrt{2\pi})-$Lipschitz (given that $Q'(x)$ is negative Gaussian density function). Therefore, we have:
\[
 \Q^{-1}\paran{\alpha + \frac{\rho}{\sqrt{n}\sigma^3} + \Delta}\geq
  \Q^{-1}\paran{\alpha} - \frac{1}{\sqrt{2\pi}}\left(\frac{\rho}{\sqrt{n}\sigma^3}+\Delta\right),
\]
We can choose $\Delta=\Delta'/\sqrt{n}$, and show that there is a constant $C_0$ such that:
\[
 \Q^{-1}\paran{\alpha + \frac{\rho}{\sqrt{n}\sigma^3} + \Delta}\geq
  \Q^{-1}\paran{\alpha} - \frac{C_0}{\sqrt{n}}.
\]
which gives the approximate exponent:
\begin{align*}
  \log\Delta' - C_0\sigma -\frac{1}{2}\log n - {n}\mu+Q^{-1}(\alpha)\sigma \sqrt{n} \leq { n\gamma_{n,m}}.
\end{align*}
The term $ \log\Delta' - C_0\sigma$ is constant and hence $O(1)$. The final result follows as:
\[
 {  n\gamma_{n,m}\geq nH(Y|X)+
 \sqrt{n}\sigma
  \Q^{-1}\paran{\alpha}  -\frac{\log n }{2} + O(1).}
\]

\begin{remark}
Looking at the proof more closely, a precise statement of the bound is as follows:
\begin{align*}
  \epsilon\leq \alpha + \frac{\rho}{\sqrt{n}\sigma^3} +
  \exp\paran{
{n}\mu-Q^{-1}(\epsilon)\sigma \sqrt{n} + n\gamma_{n,m}}.
\end{align*}
The choice of $\epsilon=\alpha + \frac{\rho}{\sqrt{n}\sigma^3} + \Delta$ for any $\Delta>0$ needs to satisfy $\epsilon\in(0,1)$. Our approximation ignores this condition, which can lead to vacuous results. In certain cases, even $\alpha + \frac{\rho}{\sqrt{n}\sigma^3}$  can be outside $(0,1)$, which yields a vacuous bound, although asymptotically as $n\to\infty$, the term will always be within the desired range. Another component is $\Delta$, which is  between $(0,1)$. This means that $\log\Delta <0$, and therefore, the actual bound is smaller that $nH(Y|X)+
 \sqrt{n}\sigma
  \Q^{-1}\paran{\alpha}$. Again, as $n$ increases, these impacts vanish, and the bound should be non-vacuous.
\label{remark:tightness_bound}
\end{remark}
\section{Efficiency-Confidence Trade-off for General Notions of Efficiency}
\label{sec:eff-conf-general}
In \cite{vovk_algorithmic_2022}, Section 3.1, various criteria for efficiency has been discussed such as sum, number, unconfidence, fuzziness, multiple, and excess criterion. Our notion of efficiency based on the prediction set size is the number criterion in the transductive setting. Here. we can generalize the result for a general criterion of efficiency that can be expressed by a measure (not necessarily a probability measure). We start with the following more general result. 
\begin{theorem}
    Consider a transductive conformal predictor $\Gamma^{\alpha}(Z_{1}^{m},X_{m+1}^{m+n})$ given a labeled dataset  $Z_{1}^{m}$ and test samples $X_{m+1}^{m+n}$ with unknown labels $Y_{m+1}^{m+n}$. If the predictor has the confidence $1-\alpha$, then for any positive $\beta$ and any \textit{measure} $Q$, we have:
    \begin{align*}
        \P(P(Y_{m+1}^{m+n}|X_{m+1}^{m+n}) & \leq \beta Q(Y_{m+1}^{m+n}|X_{m+1}^{m+n}) ) \leq \\
        & \alpha +  \beta \int P(X_{m+1}^{m+n},Z_{1}^{m})  Q(\Gamma^{\alpha}(Z_{1}^{m},X_{m+1}^{m+n})|X_{m+1}^{m+n})dX_{m+1}^{m+n} dZ_{1}^{m} . 
    \end{align*}   
\label{thm:verdu-han-withQ}
\end{theorem}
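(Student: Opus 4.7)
The plan is to generalize the proof of Theorem \ref{thm:verdu-han-transductive} (Section \ref{proof:thm:verdu-han-transductive}) by replacing the threshold set with one encoding a likelihood-ratio type condition against the auxiliary measure $Q$. Using the same shorthand $\mX = X_{m+1}^{m+n}$, $\mY = Y_{m+1}^{m+n}$, $\mZ = Z_{1}^{m}$, I would define, for each $\mX$,
\[
B_{\mX} := \{ \mY \in \gY^n : P(\mY \mid \mX) \leq \beta\, Q(\mY \mid \mX) \},
\]
and rewrite the left-hand side of the inequality as $\P(\mY \in B_{\mX}) = \int P(\mX,\mZ,\, B_{\mX})\, d\mX\, d\mZ$. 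Since the training sample $\mZ$ is independent of $(\mX,\mY)$, one has $P(\mY\mid \mX,\mZ) = P(\mY\mid \mX)$ throughout, exactly as in the proof of Theorem \ref{thm:verdu-han-transductive}.

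I would then split $B_{\mX} = (B_{\mX} \cap \Gamma^{\alpha}(\mZ,\mX)^c) \cup (B_{\mX} \cap \Gamma^{\alpha}(\mZ,\mX))$ and treat the two pieces separately. The first piece is bounded by the coverage hypothesis, contributing at most $\alpha$. For the second, the defining inequality of $B_{\mX}$ gives
\[
\sum_{\mY \in B_{\mX}\cap \Gamma^{\alpha}(\mZ,\mX)} P(\mY\mid\mX) \;\leq\; \beta \sum_{\mY \in \Gamma^{\alpha}(\mZ,\mX)} Q(\mY\mid\mX) \;=\; \beta\, Q(\Gamma^{\alpha}(\mZ,\mX) \mid \mX),
\]
where I have used that $B_{\mX}\cap \Gamma^{\alpha} \subset \Gamma^{\alpha}$ and that $Q$ is a measure. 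Integrating against $P(\mX,\mZ)$ yields the second term of the claimed bound. The special case in which $Q$ is the counting measure recovers Theorem \ref{thm:verdu-han-transductive}.

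The main subtlety, and the only place where one has to be careful, is interpreting $Q$ as a general (not necessarily probability) measure on $\gY^n$: in a continuous-label setting one should read $P(\mY\mid\mX)$ and $Q(\mY\mid\mX)$ as Radon--Nikodym densities with respect to a common dominating measure, so that the ratio used to define $B_{\mX}$ is well posed and the sum above becomes an integral $\int_{\Gamma^{\alpha}(\mZ,\mX)} Q(d\mY\mid\mX)$. Once this measure-theoretic bookkeeping is in place, the argument is a term-by-term transcription of the chain of inequalities in Section \ref{proof:thm:verdu-han-transductive}, with the cardinality of $\Gamma^{\alpha}(\mZ,\mX)$ replaced throughout by $Q(\Gamma^{\alpha}(\mZ,\mX)\mid \mX)$; no new ideas beyond this replacement should be needed.
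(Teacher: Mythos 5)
Your proposal is correct and follows essentially the same route as the paper: both define $B_{\mX}=\{\mY: P(\mY\mid\mX)\leq \beta Q(\mY\mid\mX)\}$, split over $\Gamma^{\alpha}$ and its complement, bound the complement piece by $\alpha$ via the coverage hypothesis, and bound the remaining piece by $\beta\, Q(\Gamma^{\alpha}(\mZ,\mX)\mid\mX)$ using the defining inequality of $B_{\mX}$ together with the independence of $\mZ$ from $(\mX,\mY)$. Your remark on reading $P(\cdot\mid\mX)$ and $Q(\cdot\mid\mX)$ as densities with respect to a common dominating measure in the continuous case is a reasonable clarification that the paper leaves implicit.
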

\begin{proof}
    We follow the idea of the proof given in \ref{proof:thm:verdu-han-transductive}. Define:
    \[
    B_{\mX} := \{ \mY:P(\mY|\mX) \leq \beta Q(\mY|\mX) \}.
    \]
    We need to modify the last step the of the proof as follows:
    \begin{align*}
        \P( P(\mY|\mX) < \beta Q(\mY|\mX) ) & = \int P(\mX,B_{\mX})d\mX  \\
        &  \leq  \alpha +  \int\int P(\mX,\mZ)\paran{\sum_{\mY, \mY\in B_{\mX}\cap \Gamma^{\alpha}(\mZ,\mX)}  \beta Q(\mY|\mX) } d\mX d\mZ\\
        & \leq  \alpha +  \beta \int P(\mX,\mZ)  Q(\Gamma^{\alpha}(\mZ,\mX)|\mX)d\mX d\mZ .
    \end{align*}
\end{proof}

Finally, we can extend the result to the non-asymptotic case. The measure $Q$ can be the one represented by the model or can be any notion of efficiency as before. A similar trick has been used in Eq. (102) of \citep{polyanskiy_channel_2010} in their meta-converse analysis.

\begin{theorem}
    For a transduction conformal predictor with confidence $1-\alpha$. Define:
    \begin{align*}
   \gamma^{(Q)}_{n,m}:= &\frac{1}{n}\log \int P(X_{m+1}^{m+n},Z_{1}^{m})  Q(\Gamma^{\alpha}(Z_{1}^{m},X_{m+1}^{m+n})|X_{m+1}^{m+n})dX_{m+1}^{m+n} dZ_{1}^{m}\\
   =&\frac{1}{n}\log\E_{X_{m+1}^{m+n},Z_{1}^{m}}\paran{
   Q(\Gamma^{\alpha}(Z_{1}^{m},X_{m+1}^{m+n})|X_{m+1}^{m+n})
   },
    \end{align*}
    for any measure $Q(Y|X)$ satisfying $Q(Y_{m+1}^{m+n}|X_{m+1}^{m+n})=\prod_{i=1}^n Q(Y_{m+i}|X_{m+i})$.
    Then for any $n$ and $\Delta>0$ such that 
$\alpha + \frac{\rho}{\sqrt{n}\sigma^3} + \Delta\in[0,1]$, we have:
    \begin{align*}
 \log\Delta + n\mu+
 \sqrt{n}\sigma
  \Q^{-1}\paran{\alpha + \frac{\rho}{\sqrt{n}\sigma^3} + \Delta}  \leq { n\gamma^{(Q)}_{n,m}}
\end{align*}
where $Q(\cdot)$ is the Q-function, and:
\begin{align}
    \mu&:=\E\paran{\log
    \frac{P(Y|X)}{Q(Y|X)}
    }\\
    \sigma&:=\mathrm{Var}\paran{\log
    \frac{P(Y|X)}{Q(Y|X)}}^{1/2}=\paran{\E\paran{\log
    \frac{P(Y|X)}{Q(Y|X)} - \mu}^2}^{1/2}\\
    \rho &:=\E\paran{\card{\log
    \frac{P(Y|X)}{Q(Y|X)} - \mu}^3}.
\end{align}
\end{theorem}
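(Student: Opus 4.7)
The plan is to mirror the proof of Theorem~\ref{thm:nonasymptotic_transductive} in Section~\ref{proof:thm:nonasymptotic_transductive}, but start from the more general statement of Theorem~\ref{thm:verdu-han-withQ} instead of Theorem~\ref{thm:verdu-han-transductive}. The key observation is that the factorization assumption $Q(Y_{m+1}^{m+n}|X_{m+1}^{m+n}) = \prod_{i=1}^n Q(Y_{m+i}|X_{m+i})$, together with the i.i.d.\ structure of $(X_{m+i}, Y_{m+i})$, makes the log-likelihood ratio $\log \tfrac{P(Y_{m+1}^{m+n}|X_{m+1}^{m+n})}{Q(Y_{m+1}^{m+n}|X_{m+1}^{m+n})}$ a sum of i.i.d.\ random variables with mean $\mu$, variance $\sigma^2$ and absolute third central moment $\rho$ as defined in the statement. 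This is precisely the setup the Berry-Esseen bound (Theorem~\ref{thm:berry-esseen}) is designed for.

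First, I would apply Theorem~\ref{thm:verdu-han-withQ} and rewrite the left-hand side event as a statement about the normalized sum
\[
\frac{1}{\sqrt{n}\sigma}\sum_{i=1}^n \left( \log \frac{P(Y_{m+i}|X_{m+i})}{Q(Y_{m+i}|X_{m+i})} - \mu\right) \leq \frac{\log\beta}{\sqrt{n}\sigma} - \frac{\sqrt{n}\mu}{\sigma}.
\]
Applying Berry-Esseen gives a lower bound of the form $\Q\bigl(-\tfrac{\log\beta}{\sqrt{n}\sigma} + \tfrac{\sqrt{n}\mu}{\sigma}\bigr) - \tfrac{\rho}{\sqrt{n}\sigma^3}$ for $\P(P(Y|X) \leq \beta Q(Y|X))$. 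Plugging this into the right-hand side of Theorem~\ref{thm:verdu-han-withQ} yields
\[
\Q\!\left(-\frac{\log\beta}{\sqrt{n}\sigma} + \frac{\sqrt{n}\mu}{\sigma}\right) - \frac{\rho}{\sqrt{n}\sigma^3} \leq \alpha + \beta\,e^{n\gamma^{(Q)}_{n,m}}.
\]

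Next I would choose $\beta = \exp\!\bigl(n\mu - \Q^{-1}(\epsilon)\sigma\sqrt{n}\bigr)$ exactly as in the proof of Theorem~\ref{thm:nonasymptotic_transductive}, so that the left-hand Q-function term equals $\epsilon$. This reduces the inequality to
\[
\epsilon \leq \alpha + \frac{\rho}{\sqrt{n}\sigma^3} + \exp\!\bigl(n\mu - \Q^{-1}(\epsilon)\sigma\sqrt{n} + n\gamma^{(Q)}_{n,m}\bigr).
\]
Setting $\epsilon = \alpha + \tfrac{\rho}{\sqrt{n}\sigma^3} + \Delta$ (which is allowed by the hypothesis that this value lies in $[0,1]$) and taking logarithms isolates $n\gamma^{(Q)}_{n,m}$ on the right, yielding the claimed bound.

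There is no serious obstacle here, as the argument is essentially a direct generalization of Theorem~\ref{thm:nonasymptotic_transductive}: the counting measure in that theorem is replaced by the general measure $Q$, and the i.i.d.\ random variable $\log P(Y|X)$ is replaced by $\log \tfrac{P(Y|X)}{Q(Y|X)}$. The only care needed is to verify that the factorization property of $Q$ is used precisely where i.i.d.\ decomposition of the log-ratio is invoked, and that the constants $\mu, \sigma, \rho$ are well-defined (which is implicitly assumed by their appearance in the statement, e.g.\ one needs $Q$ to dominate $P$ appropriately so the log-ratio is finite almost surely). The rest is a mechanical substitution into the earlier proof.
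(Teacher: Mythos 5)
Your approach is exactly the paper's: the paper omits this proof entirely, stating only that it ``follows the exact same steps'' as the proof of Theorem~\ref{thm:nonasymptotic_transductive} with Theorem~\ref{thm:verdu-han-withQ} in place of Theorem~\ref{thm:verdu-han-transductive}, which is precisely what you carry out, and your use of the factorization of $Q$ to make $\log\tfrac{P(Y_{m+1}^{m+n}|X_{m+1}^{m+n})}{Q(Y_{m+1}^{m+n}|X_{m+1}^{m+n})}$ an i.i.d.\ sum for Berry--Esseen is the right (and only) place that hypothesis is needed. One caveat: if you actually finish the ``mechanical substitution,'' the final line reads $\log\Delta - n\mu + \sqrt{n}\sigma\,\Q^{-1}\paran{\alpha + \tfrac{\rho}{\sqrt{n}\sigma^3} + \Delta} \leq n\gamma^{(Q)}_{n,m}$ with $\mu=\E\paran{\log\tfrac{P(Y|X)}{Q(Y|X)}}$ (mirroring $-n\mu = +nH(Y|X)$ in the original, where $\mu=-H(Y|X)$), not the $+n\mu$ appearing in the statement; since $\gamma^{(Q)}_{n,m}\le 0$ for a probability measure $Q$ while $n\mu\ge 0$ grows, the stated sign cannot be correct, so this is a typo in the theorem rather than a defect in your argument --- but you should not assert that the substitution ``yields the claimed bound'' without flagging it.
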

The proof follows the exact same steps as in the proof given \ref{proof:thm:nonasymptotic_transductive}, and we omit it. 

Note that if $Q(\cdot)$ is a probability measure, the term $\mu$ is given by $\E\paran{
    D_{KL}\paran{P(Y|X)\Vert Q(Y|X)}|X
    }$. One insight from the above theorem is that the exponent of the transductive prediction efficiency, measured using a probability measure, is asymptotically the KL-divergence between the used measure and ground truth conditional probability.

\paragraph{Other efficiency metrics.} Various other efficiency metrics are used in the literature. Most of these are discussed in Chapter 3 of \cite{vovk_algorithmic_2022}. Two categories are noteworthy.  One is called observed criteria where the efficiency is measured based on the observation of the data label. The other category of criteria measures the efficiency using the prediction set only. Our derivation here can be applied to observed criterion of efficiency assuming the ratios are well defined. We consider some of these measures here. 

Consider first N-criterion (N for number), defined as:
\[
Q(\Gamma^{\alpha}(X_{m+1}^{m+n})) = \frac{1}{n}\sum_{i=1}^n \card{\Gamma_{i}^{\alpha}(X_{m+1}^{m+n})},
\]
where $\Gamma_{i}^{\alpha}(X_{m+1}^{m+n})$ is defined as the set of different labels predicted in $\Gamma_{i}^{\alpha}(X_{m+1}^{m+n})$ for the sample $m+i$. For this choice of $Q$, we get:
\[
Q(Y_{m+1}^{m+n}|X_{m+1}^{m+n}) = \frac{1}{n} \sum_{i=1}^n 1 = 1.
\]
This means that for this choice, it is still the best to simply threshold the conditional probability. The fundamental bounds on the efficiency rate, therefore, remains very similar.

Next, consider S-criterion (S for sum). This is defined as sum of $p$-values for all the labels across test samples. We consider a modified version defined as:
\[
Q(\Gamma^{\alpha}(X_{m+1}^{m+n})) = \sum_{y_{m+1}^{m+n}\in \Gamma^{\alpha}(X_{m+1}^{m+n})} p_{y_{m+1}^{m+n}},
\]
where $p_{y_{m+1}^{m+n}}$ is computed given the calibration set $Z_1^m$ and $X_{m+1}^{m+n}$. In this case, we have:
\[
Q(Y_{m+1}^{m+n}|X_{m+1}^{m+n}) = p(Y_{m+1}^{m+n}|X_{m+1}^{m+n}),
\]
where we made the conditioning of $p$-value on the test data explicit in the notation. In this case, the fundamental limits will be determined by the following ration:
\[
\log\frac{P(Y|X)}{p(Y|X)},
\]
where the small $p$ represents the $p$-value of $Y$ given $X$. $p$-values are between 0 and 1 but not a probability measure. 

We introduce a third efficiency criterion called R-criterion (R for risk). This notion measures the risk of the labels in the prediction set. Let's consider the autonomous driving use case and the object detection application. Different objects can lead to different course of actions, each incurring different costs. Therefore, we might want to measure the average risk incurred by the prediction set using a risk measure. This application corresponds to the weighted set size as the efficiency measure where the weight of each label is proportional to its risk. We define this risk as:
\[
R(\Gamma^{\alpha}(X_{m+1}^{m+n})) = \sum_{y \in \Gamma^{\alpha}(X_{m+1}^{m+n})} R(y).
\]
We leave the risk function $R(\cdot)$ quite general so it can apply to a single label or a sequence of labels. We can replace $Q$ with $R$ in the above result. Note that $R(Y|X) = R(Y)$. The fundamental limits, in this case, are the moments of the following ratio:
\[
\log\frac{P(Y|X)}{R(Y)}.
\]
In other words, the conditional probability needs to be scaled with the risk function for optimal performance.

\section{Discussion on Achievability on Non-asymptotic Bounds on Efficiency}
\label{proof:supp:achievability}
\begin{proof}
We start with the following lemma, which gives a bound on the expected set size.
\begin{lemma}
Consider two spaces for $\gX$ and $\gY$ with a joint probability distribution $P(x,y)$ defined over the product space for $x\in\gX$ and $y\in\gY$. Define the set $A_x$ for $x\in\gX$ as follows:
\[
A_x = \{y: P(y|x) \geq \beta\}.
\]
Then:
\[
\E_{X}[|A_X|]\leq \frac{1}{\beta}.
\]
\label{lemma:set_size}
\end{lemma}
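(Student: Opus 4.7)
The plan is to use a direct Markov-style pointwise bound on $|A_x|$ and then average over $X$. First, I would fix an arbitrary $x \in \gX$ and exploit the fact that $P(\cdot \mid x)$ is a probability distribution, so its total mass equals one. Restricting the sum to the set $A_x$, which by definition only contains outcomes with conditional probability at least $\beta$, gives
\[
1 \;=\; \sum_{y \in \gY} P(y \mid x) \;\geq\; \sum_{y \in A_x} P(y \mid x) \;\geq\; \beta \, |A_x|.
\]
Rearranging yields $|A_x| \leq 1/\beta$ pointwise in $x$.

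Next, since this bound is uniform in $x$, taking expectation over $X \sim P_X$ preserves it:
\[
\E_X\bigl[\,|A_X|\,\bigr] \;\leq\; \E_X\bigl[\,1/\beta\,\bigr] \;=\; 1/\beta,
\]
which is the claim. In the transductive application, one would then pick $(x, y)$ to be the joint test tuple $(X_{m+1}^{m+n}, Y_{m+1}^{m+n})$, so that the resulting $A_x$ coincides with the idealized prediction set $\Gamma^{\alpha}(X_1^n) = \{(y_1,\dots,y_n) : \prod_i P(y_i \mid X_i) \geq \beta\}$ used in the achievability theorem, and the lemma immediately controls its expected cardinality by $1/\beta$.

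There is essentially no obstacle: the proof is a one-line Markov-type observation. The only subtlety worth flagging is that the argument interprets $|A_x|$ as cardinality, which matches both the notation of the lemma and the discrete label setting of the paper; in a continuous setting one would replace the sum with an integral against a reference measure and read $|A_x|$ as the corresponding measure of $A_x$, and the same one-line derivation would carry through verbatim.
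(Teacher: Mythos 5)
Your proof is correct and is essentially the same Markov-type argument as the paper's: both rest on the observation that each element of $A_x$ carries conditional mass at least $\beta$ while the total conditional mass is at most $1$. The only (immaterial) difference is that you establish the stronger pointwise bound $|A_x|\leq 1/\beta$ and then average, whereas the paper bounds $\P(Y\in A_X)\geq \beta\,\E[|A_X|]$ directly and concludes from $\P(Y\in A_X)\leq 1$.
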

\begin{proof}
The proof is as follows:
\begin{align}
\P(A_X) & = \sum_{(x,y)\in\gX\times\gY} P(x,y) \mathbf{1}(y\in A_x)\\
 &\geq\sum_{(x,y)\in\gX\times\gY} P(x) \beta \mathbf{1}(Y\in A_x)\\
 & = \sum_{x\in\gX} P(x) \sum_{y\in A_x}\beta  \mathbf{1}(Y\in A_x) \\
 & = \beta\E[|A_X|],
\end{align}
where we used to inequality $\P(y|x)\geq \beta$ for $y\in A_x$. Using $\P(A_X)\leq 1$, we get the inequality.
\end{proof}

Now, we just need to pick $\beta$ such that the probability of $A_X$ satisfies the required confidence level. To do so, consider the set of labels:
\[
\Gamma^{\alpha}(x_{1}^{n}):=\{y_{1}^{n}: {P(y_{1}^{n}|x_{1}^{n})} \geq \beta \}.
\]
When $(X_i,Y_i)$ are independently and identically drawn from $P(X,Y)$, we can use the Berry-Esseen central limit theorem, Theorem \ref{thm:berry-esseen}, to bound the probability of the set $\Gamma^{\alpha}(x_{1}^{n})$. The probability of error is the probability that the labels $Y_{m+1}^{m+n}$ do not belong to the set  $\Gamma^{\alpha}(X_{1}^{n})$. It can be bounded as follows.
\begin{align*}
\P\paran{ {P(Y_{1}^{n}|X_{1}^{n})} \leq \beta  }   & =
\P\paran{ \frac{1}{\sqrt{n}}\sum_{i=1}^n \frac{(\log{P(Y_{i}|X_{i})}-\mu)}{\sigma}  \leq \frac{1}{\sigma\sqrt{n}}\log\beta -\frac{\sqrt{n}\mu}{\sigma} }\\
&\leq Q\paran{
\frac{-1}{\sigma\sqrt{n}}\log\beta +\frac{\sqrt{n}\mu}{\sigma}
}+ \frac{\rho}{\sqrt{n}\sigma^3}.
\end{align*}
As before $\mu = -H(Y|X)$, $\sigma$ is the variance of the log probability,  and the last step follows from Berry-Esseen theorem. To guarantee the confidence level $\alpha$, we need to choose $\beta$ as follows:

\[
 Q\paran{
\frac{-1}{\sigma\sqrt{n}}\log\beta +\frac{\sqrt{n}\mu}{\sigma}
}+ \frac{\rho}{\sqrt{n}\sigma^3} = \alpha
\]
which yields the following choice of $\beta$
\[
\beta =\exp\paran{n\mu - Q^{-1}(\alpha - \frac{\rho}{\sqrt{n}\sigma^3})\sigma \sqrt{n}}
\]
This is conditioned on $\alpha - \frac{\rho}{\sqrt{n}\sigma^3}\in(0,1)$, which might not hold for smaller $n$. Indeed, we need to have:
\[
n > \left(\frac{\rho}{\alpha\sigma^3}\right)^2.
\]
The function $Q^{-1}(\cdot)$ is $1/\sqrt{2\pi}$-Lipschitz, and we have:
\[
Q^{-1}(\alpha - \frac{\rho}{\sqrt{n}\sigma^3}) \leq Q^{-1}(\alpha) + \frac{1}{2\pi}\frac{\rho}{\sqrt{n}\sigma^3}.
\]
Therefore:
\[
\beta \leq \exp\paran{n\mu - 
\sigma \sqrt{n} Q^{-1}(\alpha) + \frac{1}{2\pi}\frac{\rho}{\sigma^2} 
}
\]
With this choice of $\beta$, the expected set size is bounded using the above lemma as:
\[
\E[|\Gamma^{\alpha}(X_{1}^{n})|] \leq \exp\paran{-n\mu + 
\sigma \sqrt{n} Q^{-1}(\alpha) - \frac{1}{2\pi}\frac{\rho}{\sigma^2},
}
\]
which yields the result. 
\end{proof}

Our derivation does not contain the logarithmic terms, $-\frac{1}{2}\log n$ that appears in the lower bound. We can use a different technique, similar to the one used in \citep{Kontoyiannis2014-px} for the lossless compression case, to get this term as well.

\subsection{Achievability for approximate distributions}
\label{app:approx_dist}
Suppose that we have access to the approximation of the conditional distribution $P(y|X)$, given by $Q(y|x)$, and the transductive confidence sets are constructed using $Q$ as follows:
\[
\Gamma_Q^{\alpha}(x_{1}^{n}):=\{y_{1}^{n}: {Q(y_{1}^{n}|x_{1}^{n})} \geq \beta \}.
\]
We can provide a bound on the confidence sets constructed in this way. The bound contains different divergences between $P(Y|X)$ and $Q(Y|X)$. 

\begin{theorem}
Consider the confidence set $\Gamma_Q^{\alpha}(X_1^n)$ defined above. Then, there is a choice of $\beta$  that achieves the confidence $1-\alpha$ at the efficiency rate $\gamma_n$ satisfying:
\[
n\gamma_n \leq \log\paran{1+n\E[\text{TV}\paran{P(\cdot|X),Q(\cdot|X)}]} + n H(Y|X) + n \E[D(P(\cdot|X)\Vert Q(\cdot|X))] +
\sqrt{n}\sigma  \mathrm{Q}^{-1}(\alpha) + O(1),
\]
assuming $\alpha\geq \rho/\sqrt{n}\sigma^3$, and:
\begin{align*}
    \sigma & =\paran{\mathrm{Var}\paran{\log Q(Y|X)}}^{1/2}\\
    \rho &=\E\paran{\card{\log Q(Y|X) -\mu}^3}.
\end{align*}
All expectations are w.r.t. data distribution $P$.
\end{theorem}
\begin{proof}
First, we can use Lemma \ref{lemma:set_size} to see that:
\[
\card{\Gamma_Q^{\alpha}(x_{1}^{n})} \leq \frac{Q^n (\Gamma_Q^{\alpha}(x_{1}^{n}))}{\beta}.
\]
Then:
\[
Q (\Gamma_Q^{\alpha}(x_{1}^{n})) \leq P (\Gamma_Q^{\alpha}(x_{1}^{n})) + \card{Q (\Gamma_Q^{\alpha}(x_{1}^{n})) - P (\Gamma_Q^{\alpha}(x_{1}^{n}))} \leq P (\Gamma_Q^{\alpha}(x_{1}^{n})) + \text{TV}\paran{P(\cdot|x_{1}^{n}),Q(\cdot|x_{1}^{n}},
\]
which implies that: 
\[
\E[\card{\Gamma_Q^{\alpha}(X_{1}^{n})}]\leq \frac{P (\Gamma_Q^{\alpha}(X_{1}^{n})) + \E[\text{TV}\paran{P(\cdot|X_{1}^{n}),Q(\cdot|X_{1}^{n}}]
}{\beta}.
\]
where all the expectations are w.r.t. the data distribution $P$. We can simplify the total variation distance further as follows:
\[
\E[\text{TV}\paran{P(\cdot|X_{1}^{n}),Q(\cdot|X_{1}^{n}}] \leq n \E[\text{TV}\paran{P(\cdot|X),Q(\cdot|X)}].
\]
We can now characterize the probability of the confidence set using the central limit theorem in a similar way:
\begin{align*}
\P\paran{ {Q(Y_{1}^{n}|X_{1}^{n})} \leq \beta  }   & \leq Q\paran{
\frac{-1}{\sigma\sqrt{n}}\log\beta +\frac{\sqrt{n}\mu}{\sigma}
}+ \frac{\rho}{\sqrt{n}\sigma^3}.
\end{align*}
The only difference is that the moments are computed for $\log Q(Y_i|X_i)$. We will come back to their computation later. First see that $\beta$ can be chosen in a way to guarantee the confidence level we are interested in:
\[
\beta \leq \exp\paran{n\mu - 
\sigma \sqrt{n} Q^{-1}(\alpha) + \frac{1}{2\pi}\frac{\rho}{\sigma^2},
}
\]
which we use to bound the expected set size
\[
\E[|\Gamma_Q^{\alpha}(X_{1}^{n})|] \leq 
 \paran{1 + n\E[\text{TV}\paran{P(\cdot|X),Q(\cdot|X)}]
} \exp\paran{-n\mu + 
\sigma \sqrt{n} Q^{-1}(\alpha) - \frac{1}{2\pi}\frac{\rho}{\sigma^2}.
}
\]
As last step we compute the moments as follows:
\begin{align*}
    \mu & = \E[\log Q(Y|X)] = - H(Y|X) - \E[D(P(\cdot|X)\Vert Q(\cdot|X))]\\
    \sigma & =\paran{\mathrm{Var}\paran{\log Q(Y|X)}}^{1/2}\\
    \rho &=\E\paran{\card{\log Q(Y|X) -\mu}^3}.
\end{align*}
\end{proof}

The key penalty for the distribution mismatch is the expected KL-divergence term $\E[D(P(\cdot|X)\Vert Q(\cdot|X))]$. This result also shows that one can directly try to approximate the conditional distribution if the error can be suitably controlled. For example, in \citet{sadinle2019least}, the authors used k-Nearest Neighbors, local polynomial estimator and Regularized multinomial logistic regression to approximate the conditional distributions.

\section{Algorithmic Details of Transductive Split Conformal Prediction}
\label{supp:algo_tscp}
In this section, we present the details of the transductive split conformal prediction algorithm using dynamic programming. The algorithm is presented in Algorithm \ref{alg:tcp_split_dp}. The underlying problem is a tree search. It starts with a dummy node (corresponds to $U_{n+1}$ in the algorithm. Each node at level $i-1$ branches out to $|\gY|$ node with the edge weight $f(x_i)[y]$ for $y\in\gY$. We are searching for the paths with the product of weights above a given value. The standard solution to such problems is dynamic programming where the branches are pruned as soon as the search constraint cannot be satisfied. The pruning condition is specified in Algorithm \ref{alg:tcp_split_dp}.

For a given product-threshold $\tau=1-q_{\alpha}$, the naive implementation of TSCP requires searching over all label tuples, namely $|\gY|^n$, where $n$ is the number of test samples in the transductive prediction. However, with dynamic programming, we can prune the paths that can never pass the threshold.  The set-size DP processes $n$ positions. At each step $j$ there are $P_{j-1} = \prod_{k < j} M_k$ live prefix log-products (where $M_k$ is the average branching factor at depth $k$), and the cost at that step is $\mathcal{O}(|\gY| \cdot P_{j-1})$, yielding a telescoping total of $\mathcal{O}\left(|\gY| \prod_{j=0}^{n-1} M_j\right) = \mathcal{O}(|\gY| \cdot S)$, where $S$ is the final set size. Therefore, the complexity can be much smaller for a classifier with spiked probabilities.

\begin{algorithm}[t]
\caption{Transductive Split Conformal Prediction with DP Prediction Step}
\label{alg:tcp_split_dp}
\begin{algorithmic}[1]
\Require Calibration data $\mathcal{D}_{\text{cal}}=\{( (x^{i}_1, y^{i}_1),\dots,(x^{i}_n, y^{i}_n) )\}_{i=1}^{N_{\text{cal}}}$, miscoverage level $\alpha \in (0,1)$
\Require A fixed predictive model $f$ returning class scores $f(x)[y]\in[0,1]$ for $y\in\mathcal{Y}$

\State Compute calibration nonconformity scores:
\[
s_i \;=\; 1 - \prod_{j=1}^n f(x^i_j)[y^i_j], 
\qquad i=1,\dots,N_{\text{cal}}.
\]

\State Compute the empirical $(1-\alpha)$ quantile (with finite-sample correction):
\[
q_{\alpha} = \text{Quantile}_{\left(1 - \alpha\right)\left(1 + \frac{1}{N_{\text{cal}}}\right)} \Big( \{s_i\}_{i=1}^{N_{\text{cal}}} \Big).
\]
\State Set the product-threshold $\tau \gets 1 - q_\alpha$.

\Function{Predict}{$x_1,\dots,x_n$}
    \State Precompute per-position class scores $p_j(y) \gets f(x_j)[y]$ for all $j\in[n], y\in\mathcal{Y}$.
    \State Precompute suffix upper bounds (for pruning):
    \[
    M_j \gets \max_{y\in\mathcal{Y}} p_j(y), 
    \qquad
    U_{n+1}\gets 1,\quad
    U_j \gets M_j \cdot U_{j+1}\ \text{ for } j=n,\dots,1.
    \]
    \Comment{$U_{j}$ upper-bounds the best achievable product from positions $j,\dots,n$}

    \State Initialize DP frontier (prefix products):
    \[
    S_0 \gets \{(\emptyset, 1)\},
    \]
    where each element of $S_j$ is a pair $(y_{1:j}, \pi)$ storing a prefix label tuple and its product $\pi=\prod_{t=1}^j p_t(y_t)$.

    \For{$j=1$ to $n$}
        \State $S_j \gets \emptyset$
        \ForAll{$(y_{1:j-1}, \pi)\in S_{j-1}$}
            \ForAll{$y\in\mathcal{Y}$}
                \State $\pi' \gets \pi \cdot p_j(y)$
                \If{$\pi' \cdot U_{j+1} \ge \tau$}
                    \Comment{prune prefixes that can never reach $\tau$}
                    \State Add $( (y_{1:j-1},y), \pi')$ to $S_j$
                \EndIf
            \EndFor
        \EndFor
    \EndFor

    \State Initialize prediction set $\Gamma(x_{1:n}) \gets \emptyset$
    \ForAll{$(y_{1:n}, \pi)\in S_n$}
        \If{$\pi \ge \tau$}
            \State $\Gamma(x_{1:n}) \gets \Gamma(x_{1:n}) \cup \{y_{1:n}\}$
        \EndIf
    \EndFor
    \State \Return $\Gamma(x_{1:n})$
\EndFunction
\end{algorithmic}
\end{algorithm}
\section{Comparison with prior works}
\label{supp:comparison}

\paragraph{Comparison with \citep{Correia2024-zq}.} The authors in  \citep{Correia2024-zq} provided information theoretic bounds on the efficiency of conformal prediction algorithms. The main bound on the expected set size is derived from Fano's inequality for variable size list decoding, given in Proposition C.7 of the paper:
\begin{equation*} 
        H(Y|X)\leq h_b(\alpha) + \alpha\log|\gY| + \E([\log|\gC(x)|]^+),
\end{equation*}
This is for a single test sample prediction. Since the bound holds for any space $\gY$ and $\gX$, we can use it for transductive confidence prediction by choosing the product space $\gY^n$ and $\gX^n$, which yields the following bound, assuming independent samples:
\begin{equation*} 
        n H(Y|X)\leq h_b(\alpha) + n\alpha\log|\gY| + n\E([\log|\gC(X)|]^+),
\end{equation*}
As $n\to\infty$, and using Jensen's inequality, we get:
\[
H(Y|X)\leq \alpha\log|\gY| + \gamma^{-}_m.
\]
The result implies that if $\gamma^{-}_m<H(Y|X)$, then
\[
\alpha \geq \frac{H(Y|X) - \gamma^{-}_m}{\log|\gY|},
\]
which means that the value $\alpha$ cannot be made arbitrarily small (or confidence arbitrarily high). Our result, as stated in Theorem \ref{thm:fundamental_asymp_transductive} is stronger, as it says that in such case the confidence goes to zero, or $\alpha\to 0$. 

This is analogous to the results in information theory  about weak and strong converses for Shannon capacity. The weak converse is proven using Fano's inequality and states that the rates above the capacity cannot have zero error. The strong converse states that the error goes to one. Fano's inequality is known to be loose in certain scenarios, which motivated many works on more efficient and tighter bounds in information theory (see \citep{polyanskiy_channel_2010} and references therein. 

\section{Transductive conformal prediction: first formulation} 
\label{supp:tcp_vovk}

As shown in \citep{Vovk2013-transductive-pred}, it should be noted that transductive conformal predictors are a class of transductive confidence predictors. Our theoretical bounds apply to all confidence predictors, which constitute a larger class. However, Theorem 3 in \citep{Vovk2013-transductive-pred} states, there is always a conformal predictor as good as a transductive one. Therefore, throughout the paper, we used mainly conformal predictors as our focus. However, the notion of nonconformity score, essential for transductive prediction, was not discussed in the paper. We review the confidence predictor using nonconformity score.

Transductive conformal predictor as in \citep{Vovk2013-transductive-pred} is defined using a \textit{transductive nonconformity score} $A: (\gX\times\gY)^* \times (\gX\times\gY)^* \to \R$ where $(\gX\times\gY)^*$ is the set of all finite sequence with elements $(X,Y)$, $X\in\gX, Y\in\gY$. $A(\zeta_1, \zeta_2)$ does not depend on the ordering of $\zeta_1$. The transductive conformal predictor for $A$, based on the labeled dataset given as $Z_{1}^{m}=((X_i,Y_i):i\in[m])$, compute the transductive nonconformity scores for each possible labels $\vv = (v_{m+1},\dots,v_{m+n})\in\gY^n$ of the test sequence $X_{m+1}^{m+n}=(X_{m+1},\dots,X_{m+n})$ as follows. Construct the labels $Y^{\vv}_{m+k}=v_{m+k}$ for $k\in[n]$, $Y^{\vv}_{i}=Y_{i}$ for $i\in[m]$. Consider the following definition:
\[
\mZ^{\vv}_S = ((X_i,Y^{\vv}_i):i\in S).
\]
Then, for each possible labels $\vv = (v_{m+1},\dots,v_{m+n})\in\gY^n$ and each ordered subset $S$ of $[m+n]$ with $n$ entries define:
\begin{equation}
    \xi^{\vv}_S := A(\mZ^{\vv}_{[m+n]\backslash S} \mZ^{\vv}_S ).
\end{equation}
and use to compute $p$-values:
\[
p(v_1,\dots,v_n) = \frac{|S: \xi^{\vv}_S\geq \xi^{\vv}_{\vv}|}{ (m+n)!/n!  }.
\]
These $p$-values can be used to construct the prediction sets as follows:
\[
\Gamma^{\alpha}(Z_{1}^{m},X_{m+1}^{m+n})=\{\vv = (v_{m+1},\dots,v_{m+n})\in\gY^n : p(v_1,\dots,v_n)\geq \alpha\}.
\]
Such construction comes with theoretical coverage guarantee that the predictor has the confidence at least $1-\alpha$ in the online mode (see Theorem 1 and Corollary 1 of \citep{Vovk2013-transductive-pred} for further discussions). 

As it can be seen from the above construction, computing all these $p$-values is computationally cumbersome. Therefore, one can try to construct transductive nonconformity measures from single nonconformity measures using another aggregator. Bonferroni predictors compute $p$-value for each test sample separately and the combine that using the Bonferroni equation:
\[
p:=\min(np_1,\dots,np_n,1),
\]
which amounts to the following modified prediction set:
\[
\Gamma^{\alpha}(Z_{1}^{m},X_{m+1}^{m+n})=\prod_{i=1}^n \left\{v_{m+i}\in\gY : p(v_{m+i})\geq \frac{\alpha}{n}\right\}.
\]
Bonferroni predictors have similar coverage guarantees to transductive conformal prediction (see Theorem 2 in \citep{Vovk2013-transductive-pred}). 

For our experiments, we use Bonferroni predictors for the $p$-values obtained from split conformal prediction (SCP). Although the method works based on computing $(1-\alpha)$-quantile, there is a 1-1 mapping to a $p$-value:  
\begin{align*}
     s \leq \quant(1-\alpha; \{S_i\}_{i=1}^n \cup \{\infty\}) \iff \frac{1}{n}\sum_{i=1}^n \mathbf{1}(S_i\geq s) > \alpha.
\end{align*}
In other words, the term $\frac{1}{n}\sum_{i=1}^n \mathbf{1}(S_i\geq s)$  is a $p$-value. Therefore, Bonferroni predictor for SCP can be obtained by running SCP per test sample using $1-\frac{\alpha}{n}$-quantile and then get the set product of predicted sets.

\subsection{On Optimal Confidence Predictors}
\label{app:optimal_conf_predictor}
The prior works considered the efficiency-confidence trade-off in context of confidence prediction \citep{Lei2014-zk,Lei2014-ss,Lei2013-ao,Lei2015-eb,sadinle2019least}.  In \citep{Lei2014-zk}, the case of binary classification is discussed where the prediction sets are constructed based on the thresholding of the conditional probability. Using Neyman-Pearson lemma, it is shown that such prediction sets achieve optimal efficiency. When the conditional probability is not given, its empirical version is used, which is shown to asymptotically achieve the optimal confidence prediction. Their analysis excludes the empty prediction sets. The optimal classifier of multi-class classifiers was discussed in \citep{sadinle2019least}. The solution is similarly based on the thresholding of the conditional probability. Our work is the general derivation of lower and upper bound on the optimal prediction sets in transductive setting, and is connected to information theoretic quantities as well.

\section{Supplementary Experimental Results}
\label{supp:experiments}
In this section, we present additional numerical results related to our theoretical bound. All experiments are with $N=10$ (corresponding to MNIST), and follows a similar setup presented in the main paper.

    \begin{figure}[ht!]
        \centering
        \includegraphics[width=0.7\textwidth]{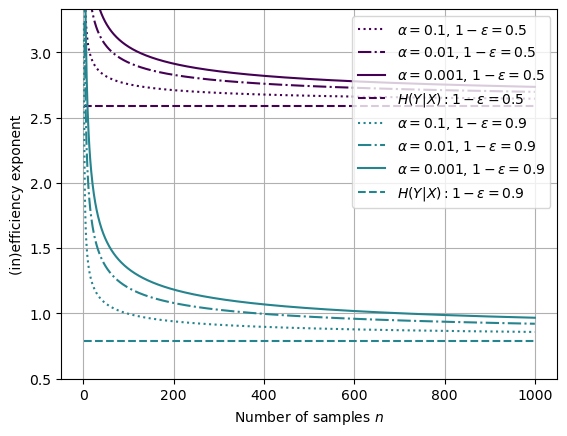} 
        \caption{The theoretical finite block length bounds for different noise levels, and different confidence $\alpha$}
        \label{fig:sub3}
    \end{figure}

    \begin{figure}[ht!]
        \centering
        \includegraphics[width=0.7\textwidth]{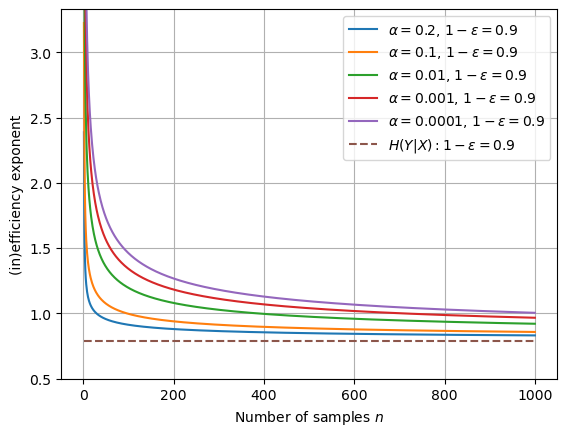} 
        \caption{The theoretical finite block length bounds for different confidence $\alpha$ in terms of $n$}
        \label{fig:sub4}
    \end{figure}

\paragraph{Simulating Theoretical Bounds.} Figures \ref{fig:sub3}, \ref{fig:sub4}, and \ref{fig:enter-label} are all based on simulating the theoretical bounds for noisy labels given in \eqref{eq:final_approximation}. We would like to observe a few trends, most of them intuitively expected. In Figure \ref{fig:sub3} and \ref{fig:sub4}, we plot the finite block-length bounds as a function of the number of test samples $n$ for different level of confidence. As the level of required confidence becomes more stringent, namely smaller $\alpha$, the inefficiency, given by the exponent of the expected set size, increases. Besides, the finite block length bound approaches slowly toward the asymptotic bound, $H(Y|X)$. Figure \ref{fig:sub3} plots the bounds for two different noise levels, which shows that changing noise level, i.e. intrinsic uncertainty, has a more drastic impact on the inefficiency.

\begin{figure}[ht!]
    \centering
    \includegraphics[width=0.9\linewidth]{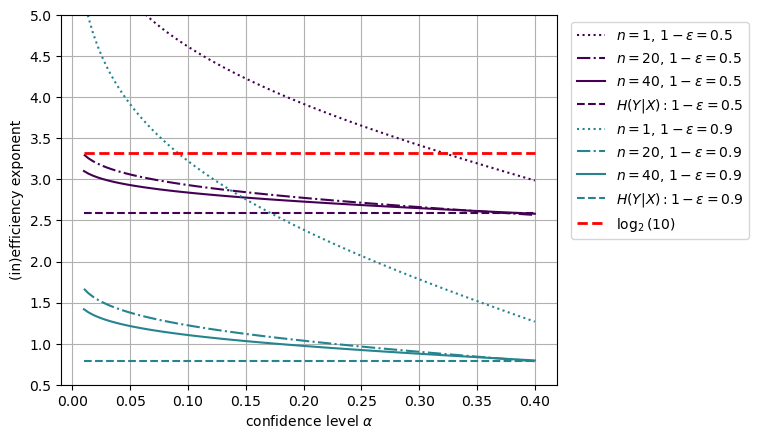}
    \caption{The theoretical finite block length bounds for different number of test samples $n$ in terms of confidence $\alpha$}
    \label{fig:enter-label}
\end{figure}

In Figure \ref{fig:enter-label}, we plot the bounds in terms of confidence levels. As we decrease the required confidence level by choosing larger $\alpha$, the inefficiency decreases as well. Similar to previous plots, choosing smaller $n$ increases the bound. 

If the full set $\gY$ is chosen every time as predicted set, it trivially included the correct label, but it yields the most inefficient prediction with the expected set size $\log|\gY|$. We plot it using the dashed red line, which 
shows $\log_2(10)$ for our experiment. The bound becomes vacuous, whenever it is above that line. There are a few reasons behind the vacuity of our bound. First of all, certain terms were ignored in the approximate bound, this includes $\log\Delta$, as well as a condition on $\alpha + \frac{\rho}{\sqrt{n}\sigma^3} + \Delta$ being within the interval $(0,1)$. 
We discuss these details in Remark \ref{remark:tightness_bound}.

\begin{figure*}[ht!]
    \centering
    \includegraphics[width=0.9\textwidth]{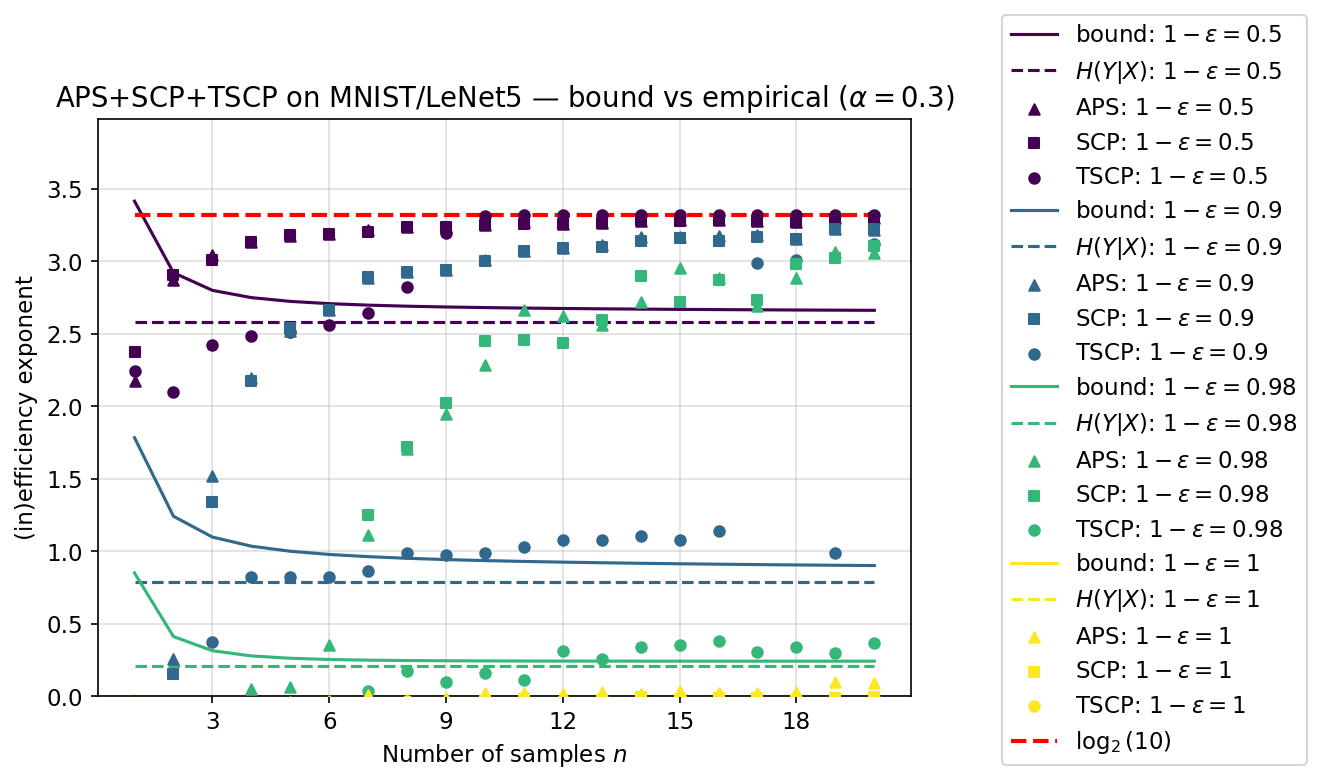}
    \caption{
    Comparison of the inefficiency exponent $\gamma_n$ for Bonferroni-corrected \ac{SCP}, \ac{APS}, and the proposed \ac{TSCP} for LeNet5 trained on noisy MNIST, with $\alpha=0.3$. 
    }
    \label{fig:mnist_tscp_w_all_0.3}
\vspace{-2mm}
\end{figure*}

\begin{figure*}[ht!]
    \centering
    \includegraphics[width=0.9\textwidth]{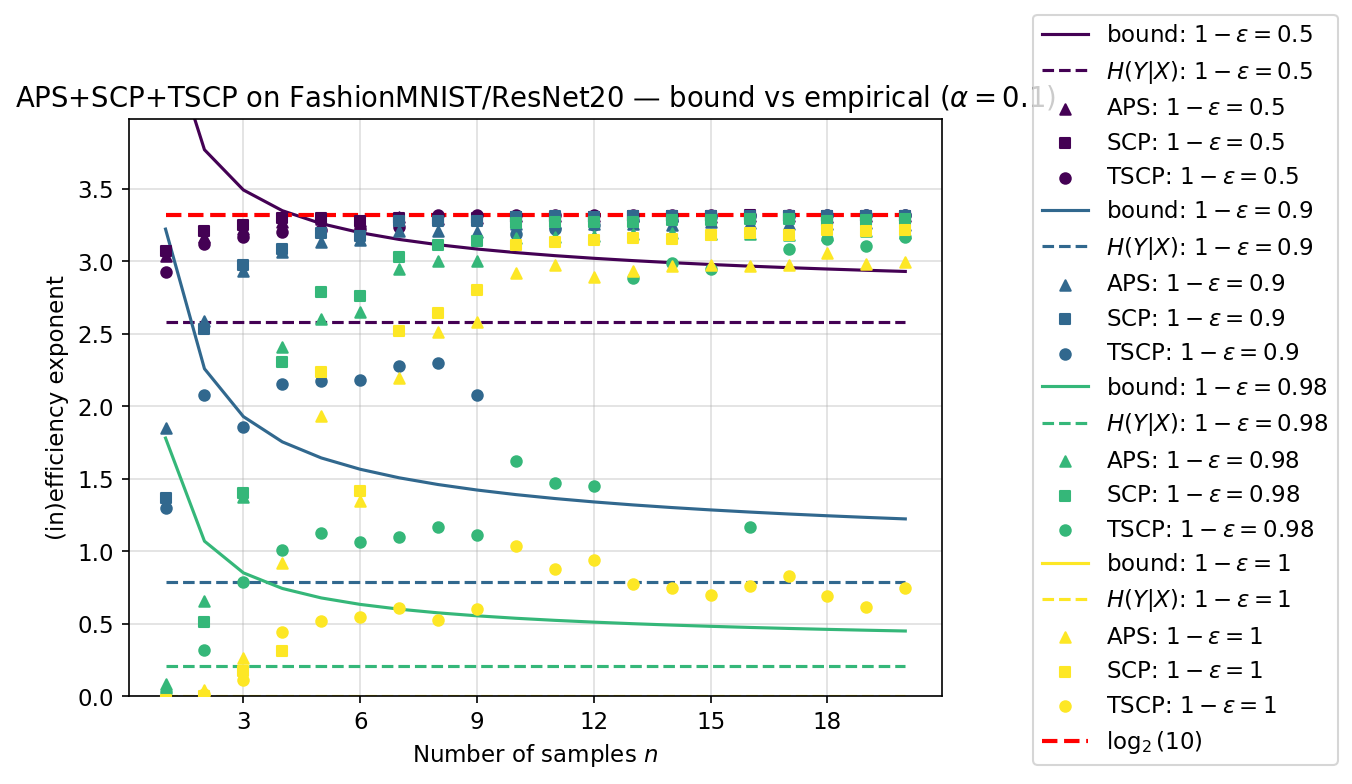}
    \caption{
    Comparison of the inefficiency exponent $\gamma_n$ for Bonferroni-corrected \ac{SCP}, \ac{APS}, and the proposed \ac{TSCP} for ResNet20 trained on noisy FashionMNIST, with $\alpha=0.1$. 
    }
    \label{fig:fmnist_tscp_w_all_0.1}
\vspace{-2mm}
\end{figure*}

\begin{figure*}[ht!]
    \centering
    \includegraphics[width=0.9\textwidth]{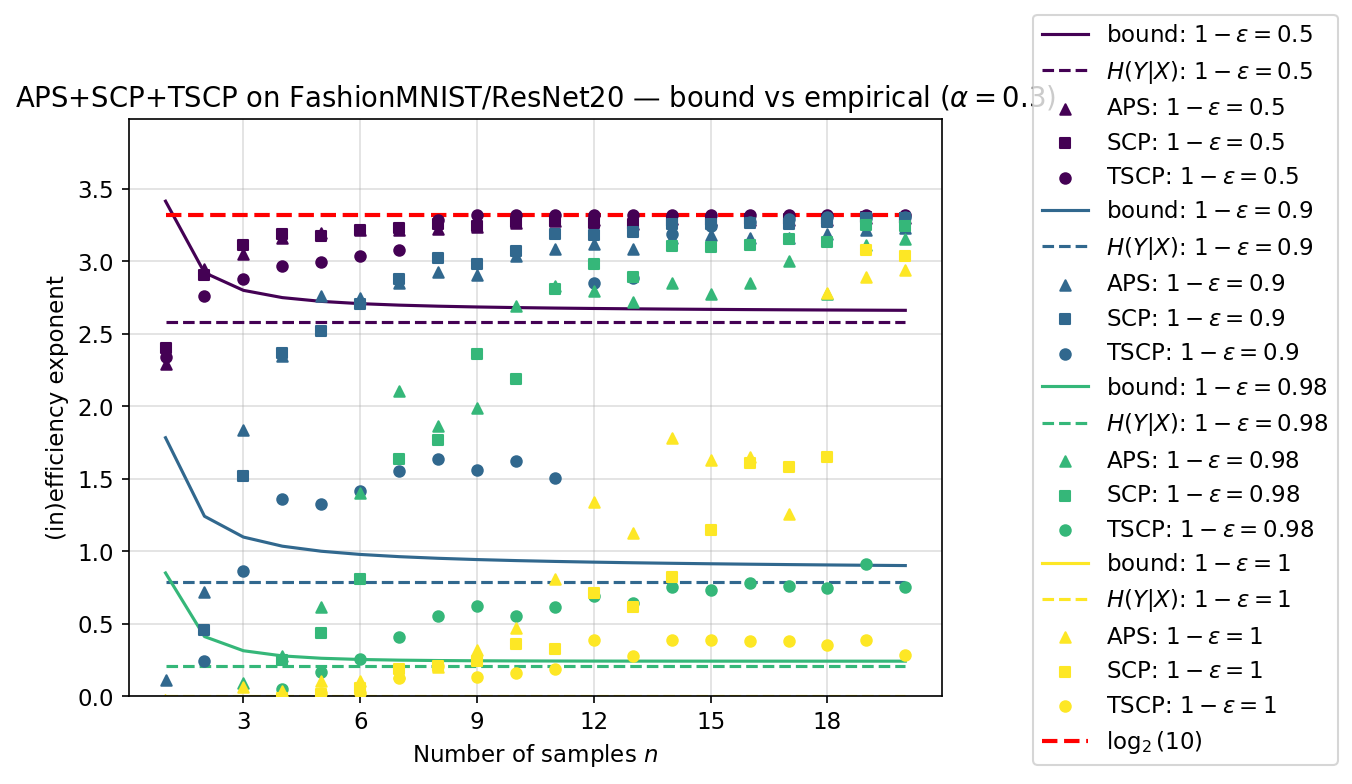}
    \caption{
    Comparison of the inefficiency exponent $\gamma_n$ for Bonferroni-corrected \ac{SCP}, \ac{APS}, and the proposed \ac{TSCP} for ResNet20 trained on noisy FashionMNIST, with $\alpha=0.3$. 
    }
    \label{fig:fmnist_tscp_w_all_0.3}
\vspace{-2mm}
\end{figure*}

\begin{figure*}[ht!]
    \centering
    \includegraphics[width=0.9\textwidth]{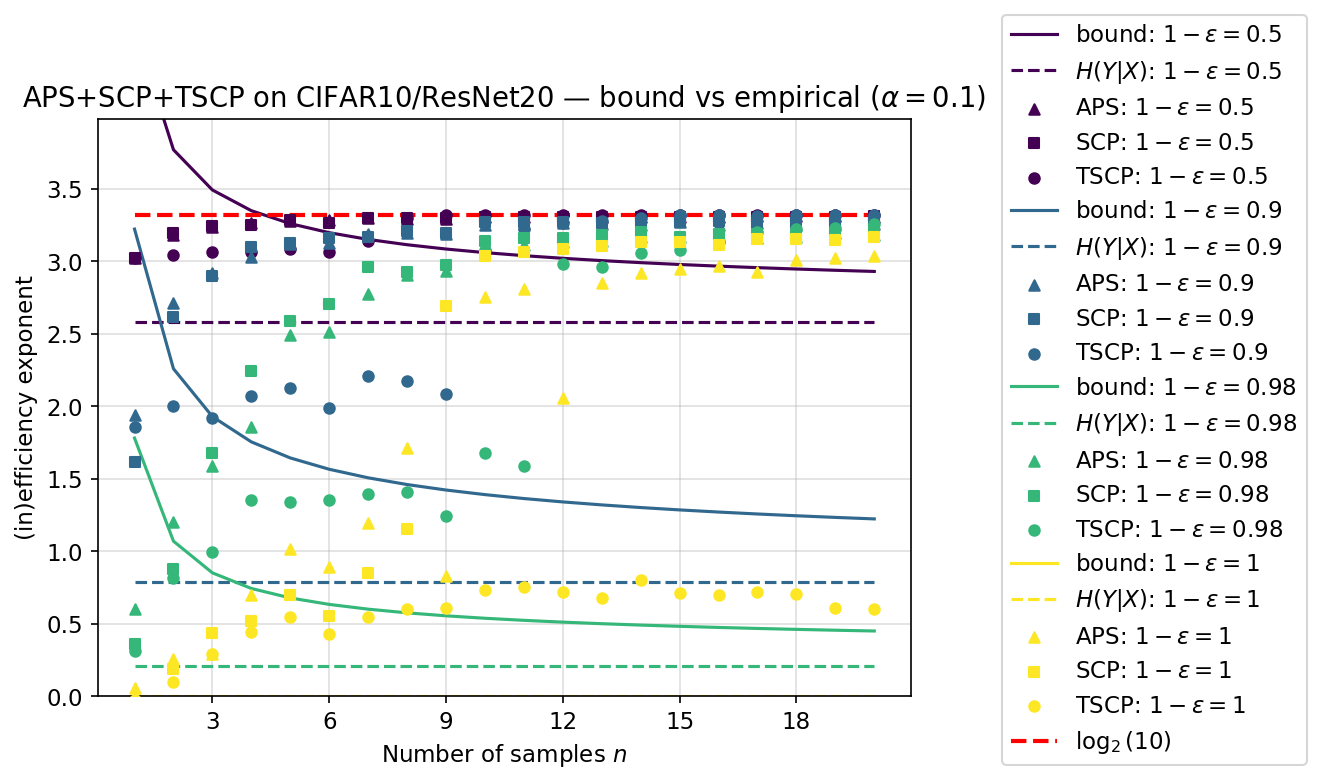}
    \caption{
    Comparison of the inefficiency exponent $\gamma_n$ for Bonferroni-corrected \ac{SCP}, \ac{APS}, and the proposed \ac{TSCP} for ResNet20 trained on noisy CIFAR10, with $\alpha=0.1$. 
    }
    \label{fig:cifar10_tscp_w_all_0.1}
\vspace{-2mm}
\end{figure*}

\begin{figure*}[ht!]
    \centering
    \includegraphics[width=0.9\textwidth]{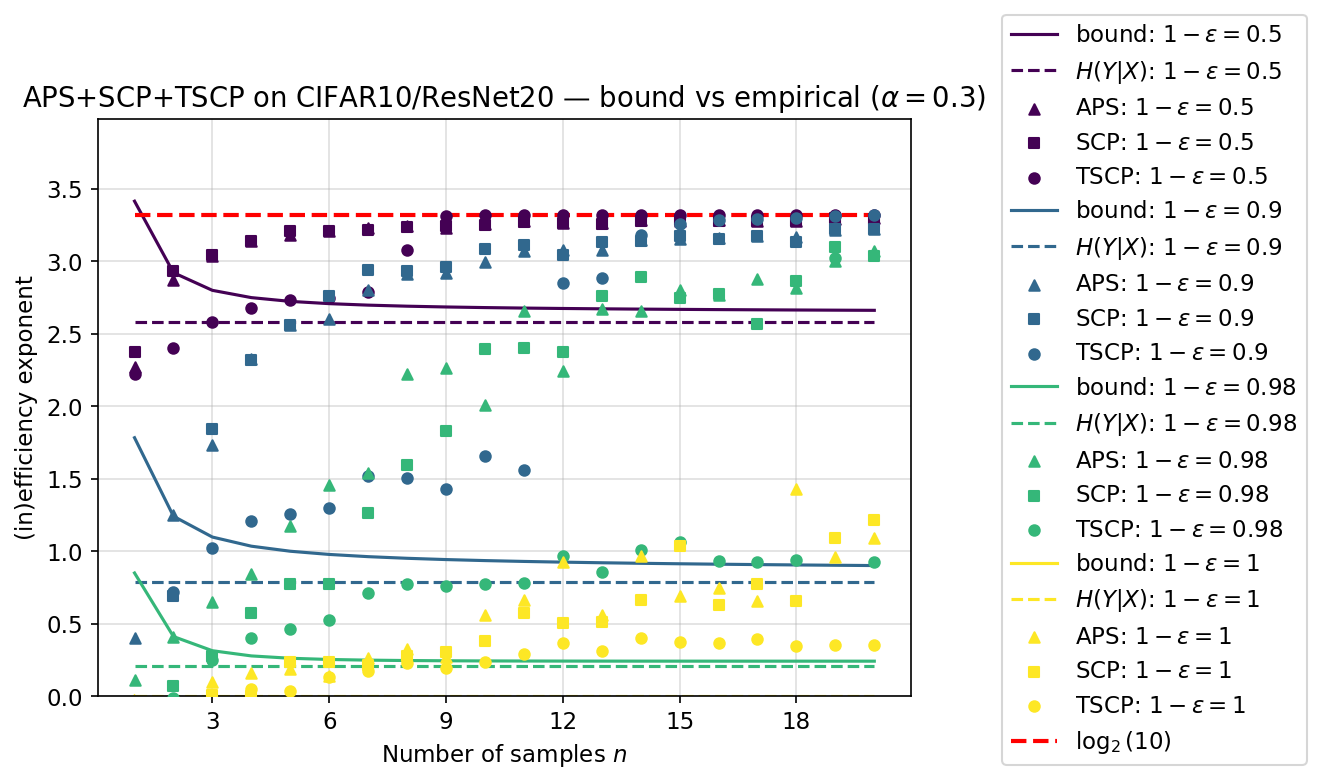}
    \caption{
    Comparison of the inefficiency exponent $\gamma_n$ for Bonferroni-corrected \ac{SCP}, \ac{APS}, and the proposed \ac{TSCP} for ResNet20 trained on noisy CIFAR10, with $\alpha=0.3$. 
    }
    \label{fig:cifar10_tscp_w_all_0.3}
\vspace{-2mm}
\end{figure*}

\begin{figure*}[ht!]
    \centering
    \includegraphics[width=0.9\textwidth]{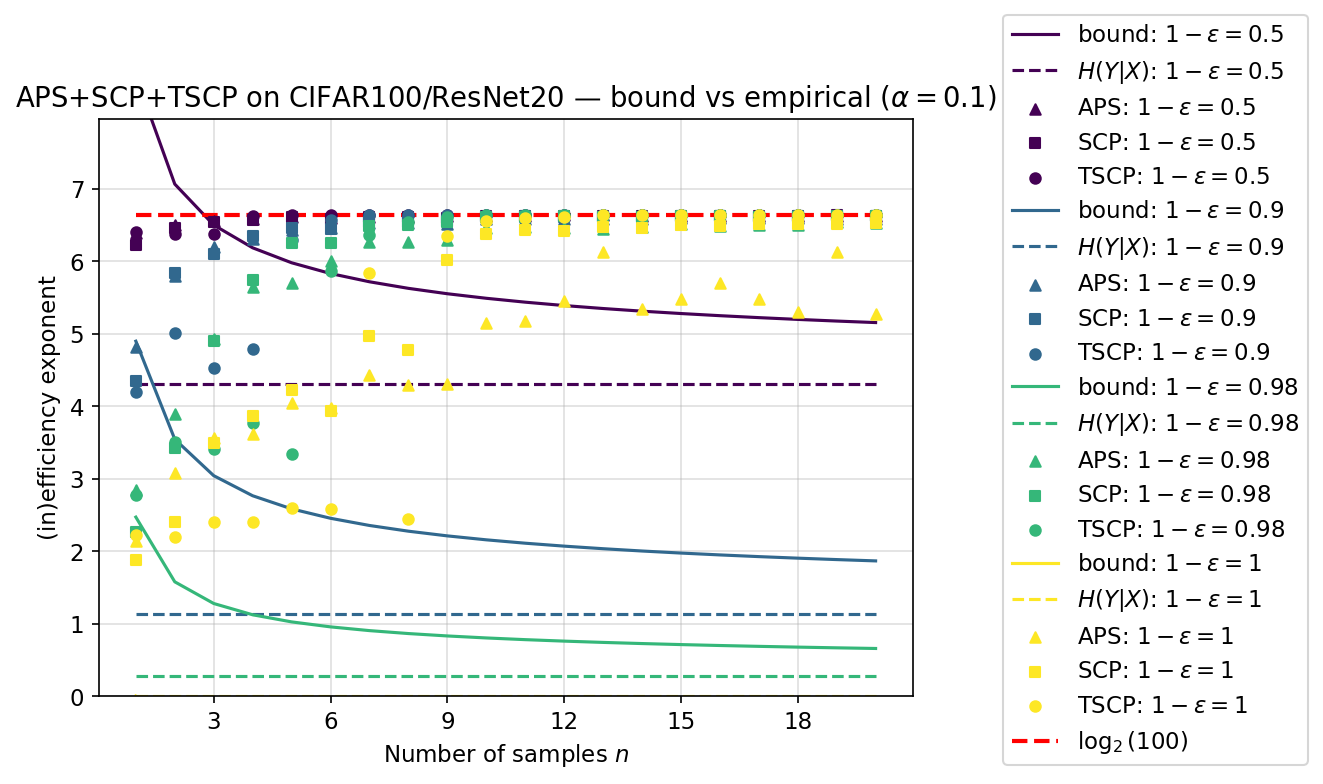}
    \caption{
    Comparison of the inefficiency exponent $\gamma_n$ for Bonferroni-corrected \ac{SCP}, \ac{APS}, and the proposed \ac{TSCP} for ResNet20 trained on noisy CIFAR100, with $\alpha=0.1$. 
    }
    \label{fig:cifar100_tscp_w_all_0.1}
\vspace{-2mm}
\end{figure*}

\begin{figure*}[ht!]
    \centering
    \includegraphics[width=0.9\textwidth]{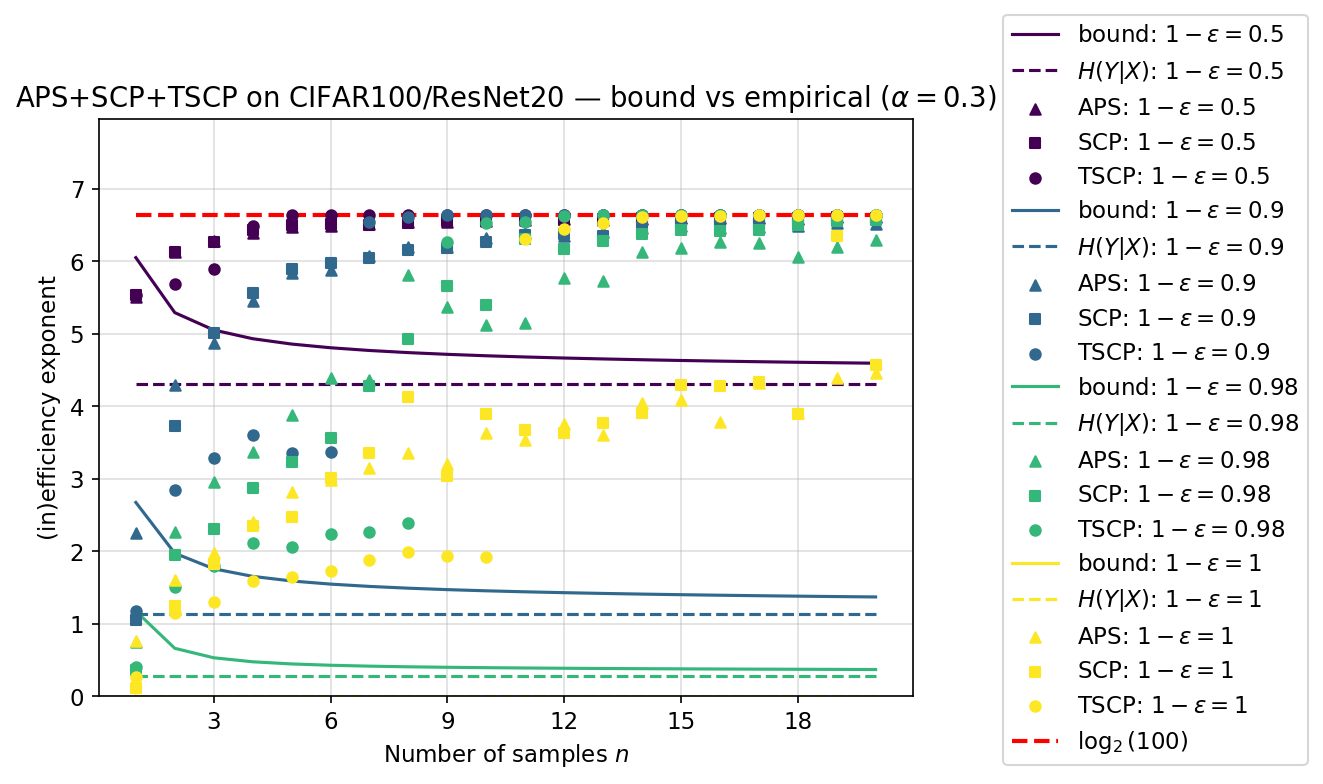}
    \caption{
    Comparison of the inefficiency exponent $\gamma_n$ for Bonferroni-corrected \ac{SCP}, \ac{APS}, and the proposed \ac{TSCP} for ResNet20 trained on noisy CIFAR10, with $\alpha=0.3$. 
    }
    \label{fig:cifar100_tscp_w_all_0.3}
\vspace{-2mm}
\end{figure*}

\paragraph{Numerical Comparisons with Transductive Methods.} 
In Figures \ref{fig:mnist_tscp_w_all_0.3},\ref{fig:fmnist_tscp_w_all_0.1}, \ref{fig:fmnist_tscp_w_all_0.3}, \ref{fig:cifar10_tscp_w_all_0.1}, \ref{fig:cifar10_tscp_w_all_0.3}, \ref{fig:cifar100_tscp_w_all_0.1} and \ref{fig:cifar100_tscp_w_all_0.3} we plotted inefficiency rate versus number of transductive samples for various models and the choice of confidence $\alpha=0.1,0.3$. Across the plots, a common trend is that our \ac{TSCP} provides a better inefficiency rate across the cases for $n<10$. As we discussed, this can be attributed to the smaller effective calibration size. We preserved the same calibration set size with Bonferroni-predicted methods to reflect the equal effort in data collection. However, this choice penalizes the transductive method for larger $n$. Another observation is that the more difficult datasets, for instance CIFAR100, the efficiency rate increases. This can be attributed to larger KL-divergence between the true conditional and the approximate one. 

\paragraph{Noiseless and Noisy Labels.} We have also included the numerical comparison of the efficiency rate for TSCP, SCP and APS in Tables \ref{tab:coverage_efficiency_0.3} for the confidence level $\alpha=0.3$, \ref{tab:coverage_efficiency_0.1_noiseless} for noiseless labels and \ref{tab:coverage_efficiency_0.1_noise0.9} for moderate noise $1-\epsilon=0.9$. These results paint a similar picture. \ac{TSCP} is consistently the more efficient approach while providing a similar coverage.
\begin{table*}[t]
\centering
\small
\caption{
Efficiency comparison of SCP, APS and TSCP for (MNIST: LeNet5), and (FashionMNIST, CIFAR10, CIFAR100: ResNet20) for different transductive sizes, $\alpha=0.3$, and $1-\epsilon=0.98$ as noise level. We compute the median and the expectation of the prediction-set size, and convert it to the efficiency rate using $\log_2(\cdot)/n$; coverage (Cov.) is the empirical joint coverage over the test batch.
}
\label{tab:coverage_efficiency_0.3}
\setlength{\tabcolsep}{4.5pt}
\begin{tabular}{lccc|ccc|ccc}
\toprule
\multirow{2}{*}{\textbf{Method}}
& \multicolumn{3}{c|}{$n=3$}
& \multicolumn{3}{c}{$n=6$} 
& \multicolumn{3}{c}{$n=9$} \\
\cmidrule(lr){2-4}\cmidrule(lr){5-7}\cmidrule(lr){8-10}
& $\gamma_{n,m}$ & Median & Cov.
& $\gamma_{n,m}$ & Median & Cov.
& $\gamma_{n,m}$ & Median & Cov. \\
\midrule
SCP (MNIST)
& -0.0615 & 0.0000 & 0.8400
& -0.0477 & 0.0000 & 0.7000
& 2.0215 & 0.0000 & 0.6600 \\

APS (MNIST)
& -0.0615 & 0.0000 & 0.6400
& 0.3496 & 0.0000 & 0.7000
& 1.9468 & 0.1469 & 0.7400 \\

\midrule
\textbf{TSCP (MNIST)}
& \textbf{-0.2146} &  {0.0000} & {0.6400}
& \textbf{-0.0597} &  {0.0000} & {0.7400}
& \textbf{0.0942} &  {0.0000} & {0.7400} \\
\bottomrule
\midrule
SCP (FMNIST)
& -0.0507 & 0.0000 & 0.7000
& 0.8103 &  0.1667 & 0.8000
& 2.3624 & 0.5094 & 0.8400 \\

APS (FMNIST)
& 0.0877 & 0.0000 & 0.6600
& 1.4049 & 0.1667 & 0.6600
& 1.9871 & 0.7505 & 0.7400 \\

\midrule
\textbf{TSCP (FMNIST)}
& \textbf{-0.0725} &  {0.0000} & {0.7600}
& \textbf{0.2593} &  {0.0000} & {0.6400}
& \textbf{0.6251} &  {0.3915} & {0.7200} \\
\bottomrule
\midrule
SCP (CIFAR10)
& 0.2773 & 0.0000 & 0.8000
& 0.7715 & 0.5975 & 0.6200
& 1.8287 & 0.9588 & 0.8200 \\

APS (CIFAR10)
& 0.6520 & 0.3333 & 0.8200
& 1.4566 & 0.7486 & 0.7400
& 2.2617 & 0.9624 & 0.7400 \\

\midrule
\textbf{TSCP (CIFAR10)}
& \textbf{0.2495} &  {0.0000} & {0.5800}
& \textbf{0.5267} &  {0.4845} & {0.7000}
& \textbf{0.7584} &  {0.6271} & {0.7400} \\
\bottomrule
\midrule
SCP (CIFAR100)
& 2.3043 & 1.6958 & 0.8000
& 3.5656 & 2.6004 & 0.7000
& {5.6643} &  {3.5609} & 0.8400 \\

APS (CIFAR100)
& 2.9601 & 1.6514 & 0.7200
& 4.3883 & 2.6794 & 0.8200
& \textbf{5.3667} &  {3.2094} & 0.7400 \\

\midrule
\textbf{TSCP (CIFAR100)}
& \textbf{1.7924} & {1.6436} & {0.7400}
& \textbf{2.2327} & {1.8604} & {0.7800}
&  {6.2748} & {1.9905} & {0.6600} \\
\bottomrule
\bottomrule
\end{tabular}

\vspace{0.5em}
\parbox{0.92\linewidth}{\footnotesize
\textbf{Notes.} The upper bound on the efficiency rate is $\log_2(\text{number-of-classes})$, which is 3.3219 for 10 classes, and 6.6438 for 100 classes.
}
\vspace{-5mm}
\end{table*}

\begin{table*}[t]
\centering
\small
\caption{
Efficiency comparison of SCP, APS and TSCP for (MNIST: LeNet5), and (FashionMNIST, CIFAR10, CIFAR100: ResNet20) for different transductive sizes, $\alpha=0.1$, and $1-\epsilon=1$ as noise level. We compute the median and the expectation of the prediction-set size, and convert it to the efficiency rate using $\log_2(\cdot)/n$; coverage (Cov.) is the empirical joint coverage over the test batch.
}
\label{tab:coverage_efficiency_0.1_noiseless}
\setlength{\tabcolsep}{4.5pt}
\begin{tabular}{lccc|ccc|ccc}
\toprule
\multirow{2}{*}{\textbf{Method}}
& \multicolumn{3}{c|}{$n=3$}
& \multicolumn{3}{c}{$n=6$} 
& \multicolumn{3}{c}{$n=9$} \\
\cmidrule(lr){2-4}\cmidrule(lr){5-7}\cmidrule(lr){8-10}
& $\gamma_{n,m}$ & Median & Cov.
& $\gamma_{n,m}$ & Median & Cov.
& $\gamma_{n,m}$ & Median & Cov. \\
\midrule
SCP (MNIST)
& \textbf{-0.0507} & 0.0000 & 0.9000
& -0.0253 & 0.0000 & 0.9000
& \textbf{-0.0134} & 0.0000 & 0.8800 \\

APS (MNIST)
& 0.0714 & 0.0000 & 0.8800
& 0.0094 & 0.0000 & 0.8800
& 0.0123 & 0.0000 & 0.9600 \\

\midrule
\textbf{TSCP (MNIST)}
& {-0.0298} &  {0.0000} & {0.9200}
& \textbf{-0.0363} &  {0.0000} & {0.8200}
& {-0.0099} &  {0.0000} & {0.8800} \\
\bottomrule
\midrule
SCP (FMNIST)
& 0.1754 & 0.0000 & 0.8400
& 1.4176 &  0.4308 & 0.9400
& 2.8065 & 0.5744 & 0.9800 \\

APS (FMNIST)
& 0.2608 & 0.0000 & 0.8400
& 1.3496 & 0.3333 & 0.9800
& 2.5844 & 0.4633 & 0.9400 \\

\midrule
\textbf{TSCP (FMNIST)}
& \textbf{0.1111} &  {0.0000} & {0.8400}
& \textbf{0.5478} &  {0.2642} & {0.9800}
& \textbf{0.6031} &  {0.4542} & {0.9000} \\
\bottomrule
\midrule
SCP (CIFAR10)
& 0.4368 & 0.0000 & 0.8800
& 0.5546 & 0.4308 & 0.8400
& 2.6948 & 0.4172 & 0.8800 \\

APS (CIFAR10)
& \textbf{0.2880} & 0.0000 & 0.9600
& 0.8913 & 0.3333 & 0.9800
& 0.8312 & 0.5094 & 0.9200 \\

\midrule
\textbf{TSCP (CIFAR10)}
& {0.2932} &  {0.0000} & {0.9400}
& \textbf{0.4316} &  {0.3333} & {0.8800}
& \textbf{0.6068} &  {0.4542} & {0.9200} \\
\bottomrule
\midrule
SCP (CIFAR100)
& 3.4899 & 2.4144 & 0.9200
& 3.9318 & 2.8739 & 0.8600
& {6.0168} &  {3.4213} & 0.8800 \\

APS (CIFAR100)
& 3.5698 & 2.4436 & 0.9400
& 3.9817 & 2.7293 & 0.9000
& \textbf{4.3061} &  {3.4631} & 0.9600 \\

\midrule
\textbf{TSCP (CIFAR100)}
& \textbf{2.4020} & {1.9271} & {0.9000}
& \textbf{2.5856} & {2.2654} & {0.8800}
&  {6.3501} & {2.1697} & {0.8600} \\
\bottomrule
\bottomrule
\end{tabular}

\vspace{0.5em}
\parbox{0.92\linewidth}{\footnotesize
\textbf{Notes.} The upper bound on the efficiency rate is $\log_2(\text{number-of-classes})$, which is 3.3219 for 10 classes, and 6.6438 for 100 classes.
}
\vspace{-5mm}
\end{table*}

\begin{table*}[t]
\centering
\small
\caption{
Efficiency comparison of SCP, APS and TSCP for (MNIST: LeNet5), and (FashionMNIST, CIFAR10, CIFAR100: ResNet20) for different transductive sizes, $\alpha=0.1$, and $1-\epsilon=0.9$ as noise level. We compute the median and the expectation of the prediction-set size, and convert it to the efficiency rate using $\log_2(\cdot)/n$; coverage (Cov.) is the empirical joint coverage over the test batch.
}
\label{tab:coverage_efficiency_0.1_noise0.9}
\setlength{\tabcolsep}{4.5pt}
\begin{tabular}{lccc|ccc|ccc}
\toprule
\multirow{2}{*}{\textbf{Method}}
& \multicolumn{3}{c|}{$n=3$}
& \multicolumn{3}{c}{$n=6$} 
& \multicolumn{3}{c}{$n=9$} \\
\cmidrule(lr){2-4}\cmidrule(lr){5-7}\cmidrule(lr){8-10}
& $\gamma_{n,m}$ & Median & Cov.
& $\gamma_{n,m}$ & Median & Cov.
& $\gamma_{n,m}$ & Median & Cov. \\
\midrule
SCP (MNIST)
& 2.8162 & 2.7429 & 0.9000
& 3.1161 & 3.0860 & 0.8800
& 3.2045 & 3.1923 & 1.0000 \\

APS (MNIST)
& 2.8314 & 2.8313 & 0.8800
& 3.1037 & 3.1133 & 0.8800
& 3.1770 & 3.1669 & 0.8600 \\

\midrule
\textbf{TSCP (MNIST)}
& \textbf{1.4445} &  {1.4641} & {0.8400}
& \textbf{1.4540} &  {1.3801} & {0.9200}
& \textbf{1.4961} &  {1.2871} & {0.9000} \\
\bottomrule
\midrule
SCP (FMNIST)
& 2.9737 & 2.9053 & 0.9200
& 3.1786 &  3.1289 & 0.9000
& 3.2814 & 3.3219 & 0.9600 \\

APS (FMNIST)
& 2.9325 & 2.8765 & 0.9400
& 3.1481 & 3.1553 & 0.9400
& 3.2051 & 3.2108 & 0.9400 \\

\midrule
\textbf{TSCP (FMNIST)}
& \textbf{1.8570} &  {1.7740} & {0.8600}
& \textbf{2.1825} &  {1.8881} & {0.9200}
& \textbf{2.0811} &  {1.8127} & {0.8600} \\
\bottomrule
\midrule
SCP (CIFAR10)
& 2.9025 & 2.8239 & 0.9200
& 3.1617 & 3.1519 & 0.8800
& 3.1944 & 3.1618 & 0.8400 \\

APS (CIFAR10)
& 2.9224 & 2.9116 & 0.9200
& 3.1277 & 3.1192 & 0.9600
& 3.1917 & 3.1828 & 0.9000 \\

\midrule
\textbf{TSCP (CIFAR10)}
& \textbf{1.9206} &  {1.6591} & {0.8600}
& \textbf{1.9853} &  {1.8361} & {0.9000}
& \textbf{2.0876} &  {1.8779} & {0.9200} \\
\bottomrule
\midrule
SCP (CIFAR100)
& 6.1060 & 5.9406 & 0.9000
& \textbf{6.4524} & 6.4242 & 0.8600
& {6.5564} &  {6.5492} & 0.9400 \\

APS (CIFAR100)
& 6.2001 & 6.1049 & 0.9600
& 6.4607 & 6.4597 & 0.8400
& \textbf{6.5126} &  {6.5079} & 0.9400 \\

\midrule
\textbf{TSCP (CIFAR100)}
& \textbf{4.5361} & {4.3462} & {0.8600}
& {6.5779} & {6.6439} & {0.8800}
&  {6.6439} & {6.6439} & {0.9200} \\
\bottomrule
\bottomrule
\end{tabular}

\vspace{0.5em}
\parbox{0.92\linewidth}{\footnotesize
\textbf{Notes.} The upper bound on the efficiency rate is $\log_2(\text{number-of-classes})$, which is 3.3219 for 10 classes, and 6.6438 for 100 classes.
}
\vspace{-5mm}
\end{table*}


\end{document}